\newcommand{\pms}{\small{\pm}}
\newcommand{\til}[1]{\tilde{#1}}
\theoremstyle{plain}
\newtheorem{theorem}{Theorem}
\newtheorem{lem}{Lemma}
\newtheorem{defn}{Definition}
\newtheorem{rem}{Remark}
\newtheorem{propty}{Property}
\newcommand{\E}[1]{\mathbb{E}\left[{#1}\right]}
\newcommand{\V}[1]{\mathrm{Var}\left[{#1}\right]}
\newcommand{\Esub}[2]{\mathbb{E}_{#1}\left[{#2}\right]}
\def\thanks#1{\protected@xdef\@thanks{\@thanks
        \protect\footnotetext{#1}}}
\icmltitlerunning{Robust Counterfactual Explanations for Neural Networks With Probabilistic Guarantees}
\begin{document}

\twocolumn[
\icmltitle{Robust Counterfactual Explanations for Neural Networks\\ With Probabilistic Guarantees}



\begin{icmlauthorlist}
\icmlauthor{Faisal Hamman}{yyy}
\icmlauthor{Erfaun Noorani}{yyy}
\icmlauthor{Saumitra Mishra}{comp}
\icmlauthor{ Daniele Magazzeni}{comp}
\icmlauthor{Sanghamitra Dutta}{yyy}
\end{icmlauthorlist}

\icmlaffiliation{yyy}{Department of Electrical and Computer Engineering, University of Maryland, College Park.}
\icmlaffiliation{comp}{JP Morgan Chase AI Research}

\icmlcorrespondingauthor{Faisal Hamman}{fhamman@umd.edu}

\
\begin{center}
\fontsize{9.5pt}{0pt}\selectfont
\vspace{-0.5cm}

     \textsuperscript{1}University of Maryland, College Park \:
     \textsuperscript{2}JP Morgan AI Research
\vspace{-0.5cm}
\end{center}


\icmlkeywords{Explainability, Counterfactual Explanations, Machine Learning}

\vskip 0.4in
]



\printAffiliationsAndNotice{}  

\begin{abstract}
There is an emerging interest in generating robust counterfactual explanations that would remain valid if the model is updated or changed even slightly. Towards finding robust counterfactuals, existing literature often assumes that the original model $m$ and the new model $M$ are bounded in the parameter space, i.e., $\|\text{Params}(M){-}\text{Params}(m)\|{<}\Delta$. However, models can often change significantly in the parameter space with little to no change in their predictions or accuracy on the given dataset. In this work, we introduce a mathematical abstraction termed \emph{naturally-occurring} model change, which allows for arbitrary changes in the parameter space such that the change in predictions on points that lie on the data manifold is limited. Next, we propose a measure -- that we call \emph{Stability} -- to quantify the robustness of counterfactuals to potential model changes for differentiable models, e.g., neural networks. Our main contribution is to show that counterfactuals with sufficiently high value of \emph{Stability} as defined by our measure will remain valid after potential ``naturally-occurring'' model changes with high probability (leveraging concentration bounds for Lipschitz function of independent Gaussians). Since our quantification depends on the local Lipschitz constant around a data point which is not always available, we also examine practical relaxations of our proposed measure and demonstrate experimentally how they can be incorporated to find robust counterfactuals for neural networks that are close, realistic, and remain valid after potential model changes \footnotemark . This work also has interesting connections with model multiplicity, also known as, the Rashomon effect.
\end{abstract}

\section{Introduction}
\label{introduction}
Counterfactual explanations~\cite{wachter2017counterfactual,Karimi_arXiv_2020,barocas2020hidden} have garnered significant interest in various high-stakes applications, such as lending, hiring, etc. Counterfactual explanations aim to guide an applicant on how they can change a model outcome by providing suggestions for improvement. Given an original data-point (e.g., an applicant who is denied a loan), the goal is to try to find a point on the other (desired) side of the decision boundary (a hypothetical applicant who is approved for the loan) which also satisfies several other preferred constraints, such as, (i) proximity to the original point; (ii) changes in as few features as possible; and (iii) conforming to the data manifold. Such a data-point that alters the model decision is widely referred to as a ``counterfactual.''

However, in several real-world scenarios, such as credit lending, the models making these high-stakes decisions have to be updated due to various reasons~\cite{upadhyay2021towards,black2021consistent}, e.g., to retrain on a few additional data points, change the hyper-parameters or seed, or transition to a different model class~\cite{pawelczyk2020learning}. Such model changes can often cause the counterfactuals to become invalid because typically they are quite close to the original data point, and hence, also quite close to the decision boundary. For instance, suppose the counterfactual explanation suggests an applicant to increase their income by $10K$ to get approved for a loan and they actually act upon that, but now, due to updates to the original model, they are still denied by the updated model.\footnotetext{Implementation is  available at \url{https://github.com/FaisalHamman/TReX-Counterfactuals}}

If counterfactuals become invalid due to model updates, this can lead to confusion and distrust in the use of algorithms in high-stakes applications. Users would typically act on the suggested counterfactuals over some time, e.g., increase their income for credit lending, but only to find that it is no longer enough since the model has slightly changed (perhaps due to retraining with a new seed or hyperparameter). This cycle of invalidation and regenerating new counterfactuals can not only be frustrating and time-consuming for users but also potentially hurt an institution's reputation.

This motivates our primary question:
\emph{How do we provide theoretical guarantees on the robustness of counterfactuals to potential model changes?}

Towards addressing this question, in this work, we introduce the abstraction of ``naturally-occurring'' model change for differentiable models. Our abstraction allows for arbitrary changes in the parameter space such that the change in predictions on points that lie on the data manifold is limited. This abstraction motivates a measure of robustness for counterfactuals that arrives with provable  probabilistic guarantees on their validity under naturally-occurring model change. We also introduce the notion of ``targeted'' model change and provide an impossibility result for such model change. Next, by leveraging a \emph{computable relaxation} of our proposed measure of robustness, we then design and implement algorithms to find robust counterfactuals for neural networks. Our experimental results validate our theoretical understanding and illustrate the efficacy of our proposed algorithms. We summarize our contributions here: 

\begin{itemize} [leftmargin=*, topsep=0pt, itemsep=0pt]
\item \textbf{Abstraction of ``naturally-occurring'' model change for differentiable models:} 
Existing literature~\cite{upadhyay2021towards,black2021consistent} on robust counterfactuals often assumes that the original model $m$ and the new model $M$ are bounded in the parameter space, i.e., $\|\text{Params}(M)-\text{Params}(m)\|$$<$$\Delta$. Building on \citet{dutta2022robust} for tree-based models, we note that models can often change significantly in the parameter space with little to no change on their predictions or accuracy on the given dataset. To capture this, we introduce an abstraction (see Definition~\ref{defn:natural}), that we call \emph{naturally-occurring} model change, which instead allows for arbitrary changes in the parameter space such that the change in predictions on points that lie on the data manifold is limited. Our proposed abstraction of naturally-occurring model change also has interesting connections with predictive/model multiplicity, also known as, the Rashomon Effect~\citep{Breiman2001StatisticalMT,marx2020predictive}.

\item \textbf{A measure of robustness with probabilistic guarantees on validity:}
Next, we propose a novel mathematical measure -- that we call Stability -- to quantify the robustness of counterfactuals to potential model changes for differentiable models. Stability of a counterfactual $x \in \mathbb{R}^d$ with respect to a model $m(\cdot)$ is given by: $$R_{k,\sigma^2}(x,m)=\frac{1}{k}\sum_{x_i \in N_{x,k}} \left( m(x_i)  - \gamma_x \|x-x_i\|\right),$$ where $N_{x,k}$ is a set of $k$ points in $\mathbb{R}^d$ drawn from the Gaussian distribution $\mathcal{N}(x,\sigma^2\mathrm{I}_{d})$ with $\mathrm{I}_{d}$ being the identity matrix, and $\gamma_x$ is the local Lipschitz constant of the model $m(\cdot)$ around $x$ (Definition~\ref{prop:robustness}). 

Our main contribution in this work is to provide a theoretical guarantee (see Theorem~\ref{thm:guarantee}) that counterfactuals with a sufficiently high value of Stability (as defined by our measure) will remain valid with high probability after ``naturally-occurring'' model change. Our result leverages concentration bounds for Lipschitz functions of independent Gaussian random variables (see Lemma~\ref{gauus}). 

Since our proposed Stability measure depends on the local Lipschitz constant which is not always available, we also examine practical relaxations of our measure of the form: $$\hat{R}_{k,\sigma^2}(x,m)=\frac{1}{k}\sum_{x_i \in N_{x,k}} \left(m(x_i) - |m(x)-m(x_i)| \right).$$ The first term essentially captures the mean value of the model output in a region around it (higher mean is expected to be more robust and reliable). The second term captures the local variability of the model output in around it (lower variability is expected to be more reliable). This intuition is in alignment with the results in~\citet{dutta2022robust} for tree-based models.

\item \textbf{Impossibility under targeted model change:} We also make a clear distinction between our proposed naturally-occurring and targeted model change. Under targeted model change, we provide an impossibility result (see Theorem~\ref{thm:impossibility}) that given any counterfactual for a model, one can always design a new model that is quite similar to the original model and that renders that particular counterfactual invalid. However, in this work, our focus is on non-targeted model change such as retraining on a few additional data points, changing some hyperparameters or seed, etc., for which we have defined the abstraction of ``naturally-occurring'' model change (see Definition~\ref{defn:natural}). 

\item \textbf{Experimental results:}
We explore methods for incorporating our relaxed measure into generating robust counterfactuals for neural networks. We introduce T-Rex:I (Algorithm \ref{alg:ReX}), which finds robust counterfactuals that are close to the original data point. T-Rex:I can be integrated into any base technique for generating counterfactuals to improve robustness. We also propose T-Rex:NN (Algorithm \ref{alg:LOF}), which generates robust counterfactuals that are data-supported, making them more realistic (along the lines of \citet{dutta2022robust} for tree-based models). Our experiments show that T-Rex:I can improve robustness for neural networks without significantly increasing the cost, and T-Rex:NN consistently generates counterfactuals that are similar to the data manifold, as measured using the local outlier factor (LOF).

\end{itemize}
\subsection{Related Works}
Counterfactual explanations have 
seen growing interest in recent years~\cite{verma2020counterfactual,Karimi_arXiv_2020,wachter2017counterfactual}. Regarding their robustness to model changes, \citet{pawelczyk2020counterfactual,kanamori2020dace,poyiadzi2020face} argue that counterfactuals situated on the data manifold are more likely to be more robust than the closest counterfactuals. Later, \citet{dutta2022robust} demonstrate that generating counterfactuals on the data manifold may not be sufficient for robustness. While the importance of robustness in local explanation methods has been emphasized \cite{Hancox-Li_fat_2020}, the problem of specifically generating robust counterfactuals has been less explored, with the notable exceptions of some recent works \cite{upadhyay2021towards,rawal2020can,black2021consistent,dutta2022robust,jiang2022formalising}. In \citet{upadhyay2021towards}, the authors propose an algorithm called ROAR that uses min-max optimization to find the \emph{closest} counterfactuals that are also robust. In \citet{rawal2020can}, the focus is on analytical trade-offs between validity and cost. \citet{jiang2022formalising} introduces a method for identifying close and robust counterfactuals based on a framework that utilizes interval neural networks. \citet{black2021consistent} propose that local Lipschitzness can be leveraged to generate consistent counterfactuals and propose an algorithm called Stable Neighbor Search to generate consistent counterfactuals for neural networks. Our research builds on this perspective and further performs Gaussian sampling around the counterfactual, leading to a novel estimator for which we are also able to provide probabilistic guarantees going beyond the bounded model change assumption. Furthermore, examining all three performance metrics,
namely, cost, validity (robustness), and likeness to the data-manifold has received less attention with the
notable exception of \citet{dutta2022robust} but they focus only on tree-based models (non-differentiable). We also refer to \citet{Mishra_arXiv_2021} for a survey.

We note that \citet{laugel2019issues,alvarez2018robustness} propose an alternate perspective of robustness in explanations (called $L$-stability in \citet{alvarez2018robustness}) which is built on similar individuals receiving similar explanations. \citet{pawelczyk2022probabilistically,maragno2023finding,dominguez2022adversarial} focus on  finding counterfactuals that are robust to small input perturbations (noisy counterfactuals).  In contrast, our focus is on counterfactuals remaining valid after some changes to the model, and providing theoretical guarantees thereof. 

Our work also shares interesting conceptual connections with a body of work on model multiplicity or predictive multiplicity, also known as the Rashomon effect~\citep{Breiman2001StatisticalMT,marx2020predictive,modelmult_black,hsu2022rashomon}. \citet{Breiman2001StatisticalMT} suggested that models can be very different from each other but have almost similar performance on the data manifold. The term predictive multiplicity was suggested by \citet{marx2020predictive}, defining it as the ability of a prediction problem to admit competing models with conflicting predictions and introduce formal measures to evaluate the severity of predictive multiplicity. \citet{modelmult_black} investigates ways to leverage model multiplicity beneficially in model selection processes while simultaneously addressing its concerning implications. \citet{watson2023predictive} offered a framework for measuring predictive multiplicity in classification, introducing measures that encapsulate the variation in risk estimates over the ensemble of competing models. \citet{hsu2022rashomon} unveiled a novel metric, Rashomon Capacity, for measuring predictive multiplicity in probabilistic classification. Our proposed abstraction of naturally-occurring model change, as explored in this work, can be viewed as a fresh perspective on model multiplicity that further emphasizes on the models that are more likely to occur.


\section{Preliminaries} 
Let $\mathcal{X} \subseteq \mathbb{R}^d$ denote the input space and let $\mathcal{S}{=}\{x_i \in \mathcal{X} \}_{i=1}^n $ be a dataset consisting of $n$ independent and identically distributed data points generated from a density $q$ over $\mathcal{X}$. We also let $m (\cdot):\mathbb{R}^d \to [0,1]$ denote the original machine learning model that takes a $d$-dimensional input value and produces an output probability lying between $0$ and $1$. The final decision is denoted by $\mathbbm{1}(m(x)\geq 0.5)$ where $\mathbbm{1}(\cdot)$ denotes the indicator function. 

\begin{defn}[$\gamma-$Lipschitz]\label{def:Lipschitz}
A function $m(\cdot)$ is said to be $\gamma-$Lipschitz if
$|m(x)-m(x')|{\leq} \gamma\|x-x'\|$ for all $ x,x' {\in} \mathbb{R}^d$.
\end{defn}
Here $\|\cdot\|$ denotes the Euclidean norm, i.e., for $u\in \mathbb{R}^d$, we have $\|u\|=\sqrt{u_1^2 +u_2^2+\ldots+u_d^2}$. In Remark~\ref{rem:relax}, we also discuss relaxations to local Lipschitz constants from global Lipschitz constants. We denote the updated or changed model as $M(\cdot):\mathbb{R}^d \to [0,1]$ where $M$ is a random entity. We mostly use capital letters to denote random entities, e.g., $M$, $X$, etc., and small letters to denote non-random entities, e.g., $m$, $x$, $\gamma$, $n$, etc.

\subsection{Background on Counterfactuals}

\begin{defn}[Closest Counterfactual $\mathcal{C}_{p}(x,m)$]\label{def:ClosCount}
Given $x\in \mathbb{R}^d$ such that $m(x)<0.5$, its closest counterfactual (in terms of $l_p$-norm) with respect to the model $m(\cdot)$ is defined as a point $x'\in \mathbb{R}^d$ that minimizes the $l_p$ norm $\|x-x'\|_p$ such that $m(x')\geq0.5$. 
\begin{equation}
\mathcal{C}_{p}(x,m)=\arg \min_{x'\in \mathbb{R}^d} \|x-x'\|_p 
\text{ such that } m(x')\geq0.5. \nonumber
\end{equation}  
\end{defn}
When one is interested in finding counterfactuals by changing as few features as possible, the $l_1$ norm is used (enforcing a sparsity constraint). These counterfactuals are also called \emph{sparse} counterfactuals~\cite{pawelczyk2020counterfactual}.

Closest counterfactuals often fall too far from the data manifold, resulting in unrealistic and anomalous instances, as noted in ~\citet{poyiadzi2020face,pawelczyk2020counterfactual,kanamori2020dace,verma2020counterfactual,Karimi_arXiv_2020,albini2021counterfactual}. This highlights the need for generating counterfactuals that lie on the data manifold.

\begin{defn}[Closest Data-Manifold Counterfactual $\mathcal{C}_{p,\mathcal{X}}(x,m)$]
\label{defn:data-support-CF}
Given $x\in \mathbb{R}^d$ such that $m(x)<0.5$, its closest data-manifold counterfactual $\mathcal{C}_{p,\mathcal{X}}(x,m)$ with respect to the model $m(\cdot)$ and data manifold $\mathcal{X}$ is defined as a point $x'\in \mathcal{X}$ that minimizes the $l_p$ norm $\|x-x'\|_p$ such that $m(x')\geq0.5$.
\begin{equation}
\mathcal{C}_{p,\mathcal{X}}(x,m)=\arg \min_{x'\in \mathcal{X}} \|x-x'\|_p
\text{ such that }{m(x')\geq0.5}. \nonumber
\end{equation}  
\end{defn}
In order to assess the similarity or anomalous nature of a point concerning the given dataset $\mathcal{S} \subseteq \mathcal{X}$, various metrics can be employed, e.g., K-nearest neighbors, Mahalanobis distance, Kernel density, LOF. These metrics play a crucial role in understanding the quality of counterfactual explanations generated by a model. One widely used metric in the literature on counterfactual explanations~\cite{pawelczyk2020counterfactual,kanamori2020dace,dutta2022robust} is the Local Outlier Factor (LOF).

\begin{defn}[Local Outlier Factor \cite{breunig2000lof}]
For $x \in \mathcal{S}$, let $L_k(x)$ be its $k$-Nearest Neighbors (k-NN) in $\mathcal{S}$. The $k$-reachability distance $rd_k$ of $x$ with respect to $x'$
is defined by $rd_k(x, x')= \max\{\delta(x, x'), d_k(x')\}$, where $d_k(x')$
is the distance $\delta$ between $x'$ and its $k$-th nearest instance
on $\mathcal{S}$. The $k$-local reachability density of $x$ is defined by
$lrd_k(x) = |L_k(x)| (
\sum_{x' \in L_k(x)} rd_k(x, x'))^{-1}.$ Then, the
k-LOF of $x$ on $\mathcal{S}$ is defined as follows:
$$LOF_{k,\mathcal{S}}(x) = \frac{1}{|L_k(x)|}
\sum_{x' \in L_k(x)}
\frac{lrd_k(x')}{lrd_k(x)}
.$$ Here, $\delta(x, x')$ is the distance between two $d$-dimensional feature vectors. The LOF Predicts $-1$ for anomalous points and $+1 $ for inlier points.
\label{defn:lof}
\end{defn}
\begin{figure}
\centering
\centerline{\includegraphics[width=0.6\columnwidth]{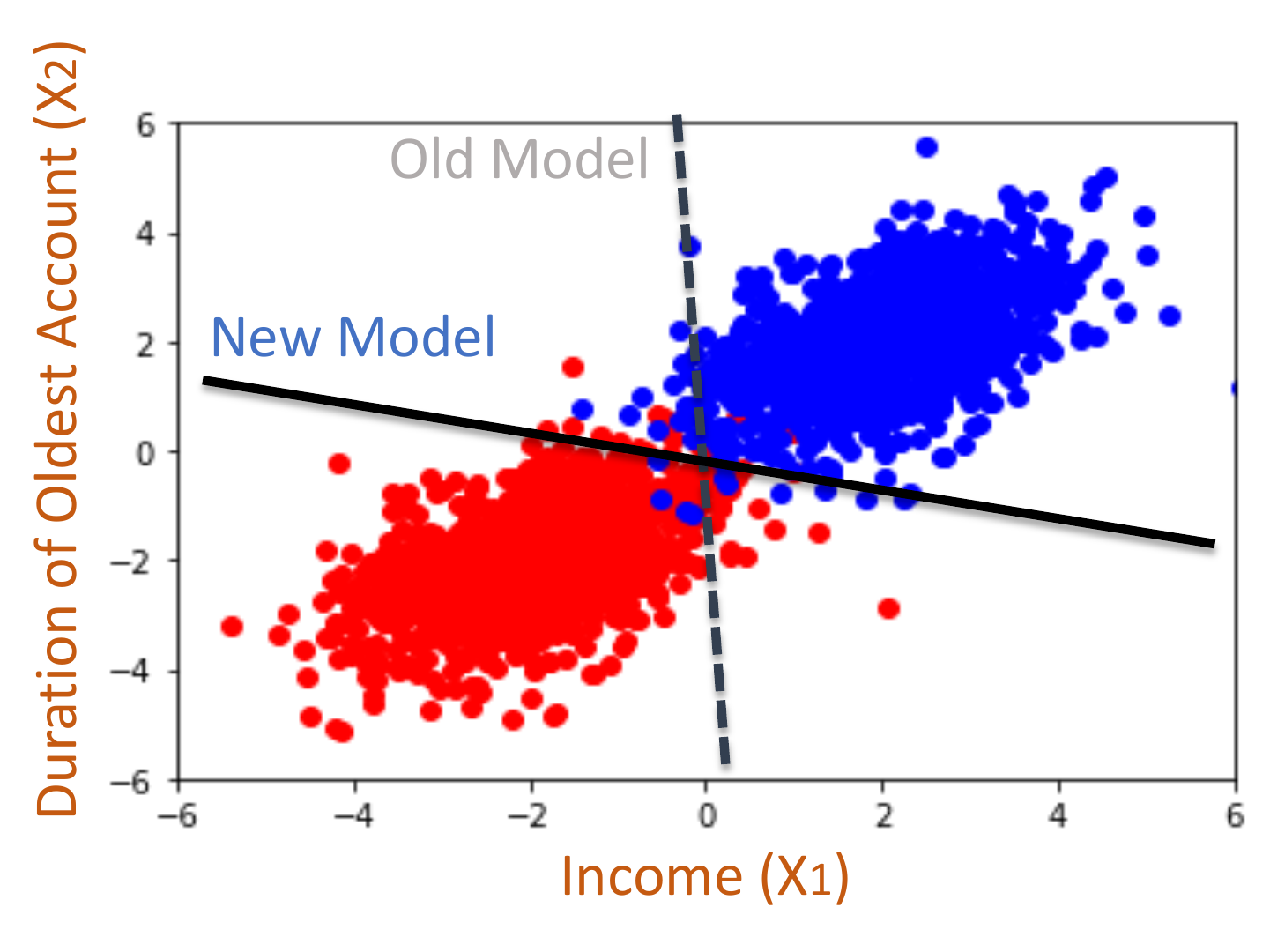}}
\caption{Models can often change drastically in the parameter space
causing little to no change in the actual decisions on the
points on the data manifold.}
\label{fig:motivation}
\end{figure}

\textbf{Goals:} In this work, our main goal is to provide \emph{probabilistic guarantees} on the robustness of counterfactuals to potential model changes for differential models such as neural networks. Towards achieving this goal, our objective involves: (i) introducing an abstraction that rigorously defines the class of model changes that we are interested in; and (ii) establishing a measure, denoted as $R_{\Phi}(x,m)$, for a counterfactual $x$ and a given model $m(\cdot)$, that quantifies its robustness to potential model changes. Here, $\Phi$ represents the hyperparameters of the robustness measure. Ideally, we desire that the measure $R_{\Phi}(x,m)$ should be high if the counterfactual $x$ is less likely to be invalidated by potential model changes. We seek to provide: (i) theoretical guarantees on the validity of counterfactuals with sufficiently high value of $R_{\Phi}(x,m)$; and also (ii) incorporate our measure $R_{\Phi}(x,m)$ into an algorithmic framework for generating robust counterfactuals which also meet other requirements, such as, low cost or likeness to the data manifold.

\section{Main Theoretical Contributions}
\label{sec:main}

In this section, we first introduce our proposed abstraction of \emph{naturally-occurring} model change and then propose a novel measure  – that we call \emph{Stability} – to quantify the robustness of counterfactuals to potential model changes. We derive a theoretical guarantee that counterfactuals that have a sufficiently high value of \emph{Stability} will remain valid after potential \emph{naturally-occurring} model change with high probability. But since our quantification would depend on the local Lipschitz constant around a data point, which is not always known, we also examine a practical relaxation of our proposed measure and demonstrate its applicability.

\subsection{Naturally-Occurring Model Change}
\begin{figure}[t]
    \centering
    \begin{overpic}[width=0.9\columnwidth]{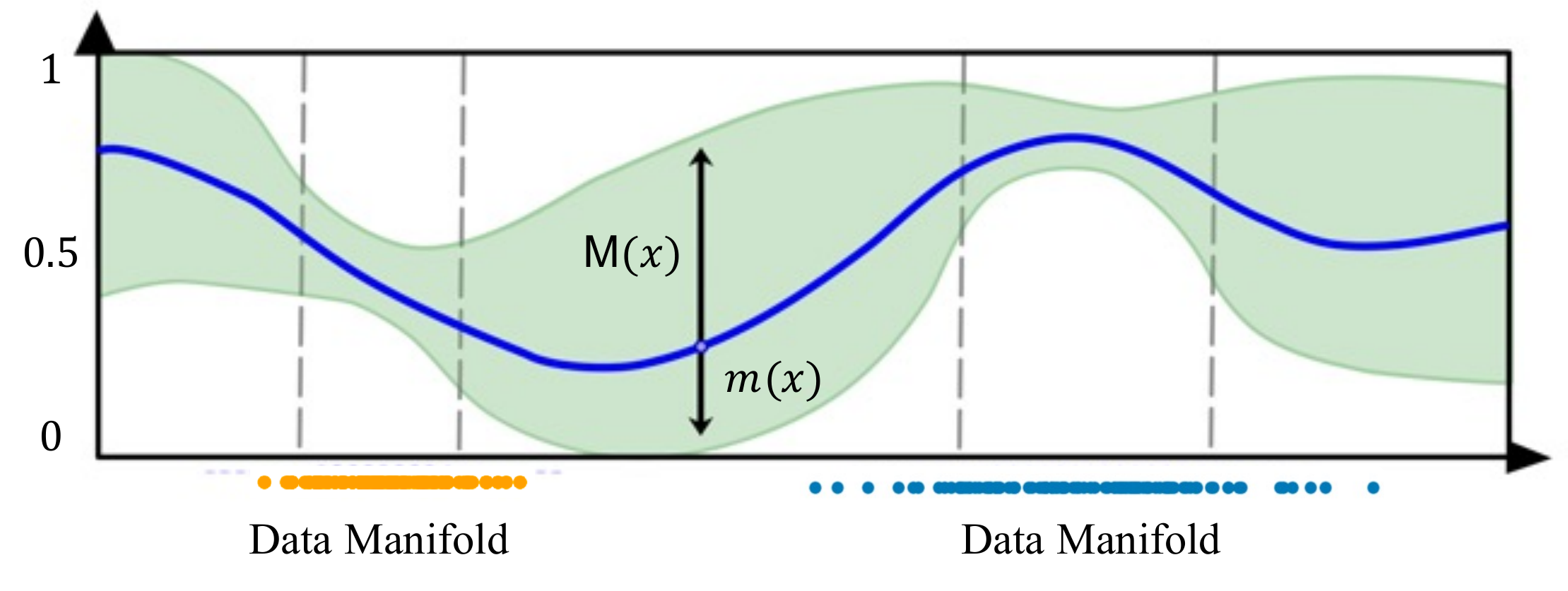}
\put (91,5) {$x$}
\end{overpic}
    \caption{Illustrates our proposed abstraction of naturally-occurring model change: The distribution of the changed model outputs $M(x)$ (stochastic) is centered around the original model output $m(x)$. The points specifically lying on the data-manifold act as anchors without much change as they exhibit lower variance in model outputs compared to points outside the manifold. This visualization also connects with the Rashomon effect, encapsulating the diverse yet similarly accurate models that can be learned from a given dataset. }
    \label{fig:manifoldMm}
\end{figure}

A popular assumption in existing literature~\cite{upadhyay2021towards,black2021consistent} to quantify potential model changes is to assume that the model changes are bounded in the parameter space, i.e., $$\|\text{Params}(M)-\text{Params}(m)\|<\Delta \text{ for a constant } \Delta.$$ Here, $\text{Params}(M)$ denote the parameters of the model $M$, e.g., weights of a neural network. However, we note that models can often change drastically in the parameter space causing little to no change on the actual decisions on the points on the data manifold (see Fig.~\ref{fig:motivation} for an example). In this work, we relax the bounded-model-change assumption, and instead introduce the notion of a naturally-occurring model change as defined in Definition~\ref{defn:natural}. Our abstraction allows for arbitrary model changes such that the change in predictions on points that lie on the data manifold is limited (see Fig. \ref{fig:manifoldMm} for an illustration).

\begin{defn}[Naturally-Occurring Model Change]\label{defn:natural} A naturally-occurring model change is defined as follows: 
\begin{enumerate}[leftmargin=*,topsep=0pt, itemsep=0pt]
\item $\E{M(X)|X=x}=\E{M(x)}=m(x)$ where the expectation is over the randomness of $M$ given a fixed value of $X=x\in \mathbb{R}^d$.\label{prop1}
\item Whenever $m(x)$ is $\gamma_{m}$-Lipschitz, any updated model $M(x)$ is also $\gamma-$Lipschitz for some constant $\gamma$. Note that, this constant $\gamma$ does not depend on $M$  since we may define $\gamma$ to be an upper bound on the Lipschitz constants for all possible $M$ as well as $m$. 
\item $\V{M(X)|X=x}=\V{M(x)}=\nu_x$ which depends on the fixed value of $X=x \in \mathbb{R}^d$. Furthermore, whenever $x$ lies on the data manifold $\mathcal{X}$, we have  $\nu_x \leq \nu$ for a small constant $\nu$.
\end{enumerate}
\end{defn}

Closely connected to naturally-occurring model change is the idea of the Rashomon effect, alternatively known as predictive or model multiplicity.  ~\cite{Breiman2001StatisticalMT,pawelczyk2020counterfactual,marx2020predictive, hsu2022rashomon} which suggests that models can be very different from each other but have almost similar performance on the data manifold, e.g., $\frac{1}{n}\sum_{i=1}^n|M(x_i)-m(x_i)|$ is small when the points $x_i$ lie on the data manifold. Under the naturally-occurring model change, this holds in expectation as follows:

\begin{restatable}[Connection to Roshomon Effect]{lem}{natocc}\label{lem:natural}
For points $x_1,\ldots,x_n \in \mathcal{X}$ (lying on the data-manifold) under naturally-occurring model change, the following holds:
\begin{equation}
\E{\frac{1}{n}\sum_{i=1}^n|M(x_i)-m(x_i)|} \leq \sqrt{\nu}.
\end{equation}
\end{restatable}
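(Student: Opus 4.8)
The plan is to bound the left-hand side by controlling each term $\E{|M(x_i) - m(x_i)|}$ separately and then averaging. The key observation is that, for a fixed data point $x_i$ on the manifold, property~\ref{prop1} of Definition~\ref{defn:natural} says $\E{M(x_i)} = m(x_i)$, so $M(x_i) - m(x_i)$ is a zero-mean random variable, and property~3 says its variance is $\nu_{x_i} \leq \nu$. Thus $|M(x_i) - m(x_i)|$ is the absolute deviation of a random variable from its mean.

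First I would apply Jensen's inequality to the concave square-root function: for each $i$,
\begin{equation}
\E{|M(x_i) - m(x_i)|} = \E{\sqrt{(M(x_i)-m(x_i))^2}} \leq \sqrt{\E{(M(x_i)-m(x_i))^2}} = \sqrt{\V{M(x_i)}} = \sqrt{\nu_{x_i}} \leq \sqrt{\nu},
\end{equation}
where the middle equality uses property~\ref{prop1} to identify the second moment about $m(x_i)$ with the variance, and the final inequality uses property~3 since each $x_i \in \mathcal{X}$.

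Next I would average over $i = 1,\ldots,n$. By linearity of expectation,
\begin{equation}
\E{\frac{1}{n}\sum_{i=1}^n |M(x_i) - m(x_i)|} = \frac{1}{n}\sum_{i=1}^n \E{|M(x_i) - m(x_i)|} \leq \frac{1}{n}\sum_{i=1}^n \sqrt{\nu} = \sqrt{\nu},
\end{equation}
which is the claimed bound.

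Honestly, there is no real obstacle here — the statement is essentially a one-line consequence of Jensen's inequality combined with the defining properties of naturally-occurring model change. The only point requiring a modicum of care is making sure the expectations are interpreted correctly: the $x_i$ are fixed points on the manifold (non-random), and all randomness is over the model $M$, so property~\ref{prop1} and property~3 apply pointwise at each $x_i$. One could also note in passing that the Lipschitz property~2 is not needed for this particular lemma; it is only the first and third properties that drive the bound.
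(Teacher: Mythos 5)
Your proof is correct. It differs from the paper's in how the averaging and the concavity argument are ordered. The paper first applies the Cauchy--Schwarz inequality pathwise to bound the random quantity $\frac{1}{n}\sum_{i}|m(x_i)-M(x_i)|$ by $\frac{1}{\sqrt{n}}\sqrt{\sum_i |m(x_i)-M(x_i)|^2}$, then applies Jensen's inequality to the square root of the \emph{aggregate} sum, and only then invokes the variance bound $\nu_{x_i}\le\nu$. You instead push the expectation inside the sum by linearity and apply Jensen termwise, giving $\E{|M(x_i)-m(x_i)|}\le\sqrt{\nu_{x_i}}$ for each $i$, and then average. Your route avoids Cauchy--Schwarz altogether and in fact yields the slightly sharper intermediate bound $\frac{1}{n}\sum_i\sqrt{\nu_{x_i}}$ (versus the paper's $\sqrt{\frac{1}{n}\sum_i\nu_{x_i}}$, which dominates it by concavity of the square root), though both collapse to the same final bound $\sqrt{\nu}$ once each $\nu_{x_i}$ is replaced by $\nu$. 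Your closing observations --- that only properties 1 and 3 of Definition~\ref{defn:natural} are used, and that the $x_i$ are fixed so all randomness is in $M$ --- are accurate and consistent with the paper's argument.
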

Thus, Definition~\ref{defn:natural} might be better suited over boundedness in the parameter space. Proof of Lemma~\ref{lem:natural} is in Appendix~\ref{apx:bound}.

\begin{rem}[Targeted Model Change] In contrast with naturally-occurring model change, we also introduce the notion of targeted model change (adversarial, worst-case) which essentially refers to a model change that is more deliberately targeted to make a particular counterfactual invalid. For example, one could have a new model $M(x)=m(x)$ almost everywhere except at or around the targeted point $x'$, i.e., $M(x')=1-m(x')$. See Section~\ref{subsec:Impossibility} for more details.
\end{rem}

\subsection{A Measure of Robustness With Probabilistic
Guarantees on Validity}

\subsubsection{Proposed Measure: Stability}

\begin{defn}[Stability] \label{def:stability}The stability of a counterfactual $x\in \mathbb{R}^d$ is defined as follows: 
\begin{align}
&R_{k,\sigma^2}(x,m)=\frac{1}{k}\sum_{x_i \in N_{x,k}} \left( m(x_i)  - \gamma \|x-x_i\|\right),
\end{align}
where $N_{x,k}$ is a set of $k$ points drawn from the Gaussian distribution $\mathcal{N}(x,\sigma^2\mathrm{I}_{d})$ with $\mathrm{I}_{d}$ being the identity matrix, and $\gamma$ is an upper bound on the Lipschitz constant for all models $M(\cdot)$ under naturally-occurring change. 
\label{prop:robustness}
\end{defn}

\begin{rem}[Relaxations to local Lipschitz]\label{rem:relax} While we prove our theoretical result (Theorem~\ref{thm:guarantee}) with the global Lipschitz constant $\gamma$, we can relax this to local Lipschitz constants $\gamma_x$, around a given point $x$. This is because we sample from a Gaussian centered around the point $x$ and hence mainly capture the variability around $x$. So most points will be very close to $x$ but a few points can still lie far away. Potential extensions of our guarantees could apply to truncated Gaussian and uniform sampling methods, given their sub-Gaussian properties. This is because Lipschitz concentration inherently extends to sub-Gaussian random variables ~\cite{baraniuk2008simple}.
\end{rem}



\subsubsection{Probabilistic Guarantee}

\begin{restatable}[Probabilistic Guarantee]{theorem}{Guarantee} Let $X_1,X_2,\ldots,X_k$ be $k$ iid random variables with distribution $\mathcal{N}(x,\sigma^2I_d)$ and $Z=\frac{1}{k}\sum_{i=1}^k (m(X_i)-M(X_i))$. Suppose $|\E{Z|M}- \E{Z}| < \epsilon'$.  Then, for any $ \epsilon > 2 \epsilon'$, a counterfactual $x \in \mathcal{X}$ under naturally-occurring model change satisfies:
\begin{equation*}
\Pr(M(x)\geq R_{k,\sigma^2}(x,m) {-} \epsilon) \!\! \geq 1 - \exp{\bigg(\frac{-k 
 {{\epsilon}}^2}{8(\gamma_m{+}\gamma)^2 \sigma^2}\bigg)}.
\end{equation*}
The probability is over the randomness of both $M$ and $X_i'$s.
\label{thm:guarantee}
\end{restatable}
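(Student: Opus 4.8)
\emph{Proof strategy.}
The plan is to collapse the statement into a concentration bound for the single empirical quantity $Z=\frac1k\sum_i\big(m(X_i)-M(X_i)\big)$ and then invoke the Gaussian--Lipschitz concentration inequality (Lemma~\ref{gauus}). First I would exploit the fact that, under naturally-occurring change, every realized $M$ is $\gamma$-Lipschitz (Property~2 of Definition~\ref{defn:natural}): for each sampled point $X_i$ this gives $M(x)\ge M(X_i)-\gamma\|x-X_i\|$. Averaging over $i=1,\dots,k$ and comparing with $R_{k,\sigma^2}(x,m)=\frac1k\sum_i\big(m(X_i)-\gamma\|x-X_i\|\big)$, the penalty terms $\gamma\|x-X_i\|$ cancel and one is left with the deterministic inequality $M(x)\ge R_{k,\sigma^2}(x,m)-Z$. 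Consequently the event $\{Z\le\epsilon\}$ is contained in $\{M(x)\ge R_{k,\sigma^2}(x,m)-\epsilon\}$, so it suffices to prove $\Pr(Z>\epsilon)\le \exp\!\big(-k\epsilon^2/(8(\gamma_m+\gamma)^2\sigma^2)\big)$.

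Next I would pin down the mean of $Z$. By Property~1 of Definition~\ref{defn:natural}, $\E{M(X_i)\mid X_i}=m(X_i)$, so taking the outer expectation over $X_i\sim\mathcal N(x,\sigma^2 I_d)$ yields $\E{Z}=0$; the hypothesis $|\E{Z\mid M}-\E{Z}|<\epsilon'$ then gives $\E{Z\mid M}<\epsilon'$ almost surely. Now condition on $M$: since the $X_i$ are independent of $M$, conditionally they are still i.i.d.\ Gaussians, and $g:=m-M$ is a fixed $(\gamma_m+\gamma)$-Lipschitz function (triangle inequality on the two Lipschitz constants). Viewing $Z=\frac1k\sum_i g(X_i)$ as a map of $(X_1,\dots,X_k)\in\mathbb R^{dk}$, a Cauchy--Schwarz estimate shows it is $\tfrac{\gamma_m+\gamma}{\sqrt k}$-Lipschitz; substituting $X_i=x+\sigma W_i$ with $W\sim\mathcal N(0,I_{dk})$ makes it $\tfrac{\sigma(\gamma_m+\gamma)}{\sqrt k}$-Lipschitz in $W$. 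Lemma~\ref{gauus} then yields $\Pr\!\big(Z-\E{Z\mid M}\ge t \mid M\big)\le \exp\!\big(-kt^2/(2\sigma^2(\gamma_m+\gamma)^2)\big)$. Since $\epsilon>2\epsilon'$ forces $\epsilon-\E{Z\mid M}>\epsilon/2$, applying this with $t=\epsilon/2$ and then taking expectation over $M$ gives the required tail bound, which combined with the first paragraph completes the argument.

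The step I expect to demand the most care is the handling of the randomness of $M$. Lemma~\ref{gauus} is a statement about a \emph{deterministic} Lipschitz function of Gaussians, so one must first condition on $M$ (legitimate because the $X_i$ are independent of $M$) to freeze $g=m-M$, and then argue that the conditional mean $\E{Z\mid M}$, which is not exactly $0$, can be absorbed uniformly through the assumption $|\E{Z\mid M}-\E{Z}|<\epsilon'$. This is precisely where the constraint $\epsilon>2\epsilon'$ and the loss of a factor $4$ in the exponent (an $8$ rather than a $2$) originate. The only remaining bookkeeping is tracking the Lipschitz constant through the averaging and the $\sigma$-rescaling, which is the routine Cauchy--Schwarz computation indicated above and I would not belabor it.
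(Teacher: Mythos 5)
Your proposal is correct and follows essentially the same route as the paper: the deterministic Lipschitz step $M(x)\ge R_{k,\sigma^2}(x,m)-Z$, the computation $\E{Z}=0$ from Property 1 of Definition~\ref{defn:natural}, the Cauchy--Schwarz bookkeeping showing $Z$ is $\tfrac{\gamma_m+\gamma}{\sqrt{k}}$-Lipschitz as a function of the centered Gaussians, and the conditioning on $M$ followed by absorbing the shift $|\E{Z\mid M}|<\epsilon'$ via $\epsilon>2\epsilon'$ (the paper writes this as $\epsilon=\tilde\epsilon+\epsilon'$ with $4\tilde\epsilon^2>(\tilde\epsilon+\epsilon')^2$, you as $t=\epsilon/2$; these are equivalent and both yield the $8$ in the exponent). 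Your explicit remark that the $X_i$ must be independent of $M$ for the conditioning to be legitimate is a point the paper leaves implicit, but otherwise the arguments coincide.
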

\textbf{Intuition Behind Our Result:} 
This stability metric (Definition~\ref{prop:robustness}) is a way to measure the robustness of counterfactuals that are subject to natural model changes (see Definition~\ref{defn:natural}). The first term in the metric, represented by $\frac{1}{k}\sum_{i=1}^k m(X_i)$, captures the average model outputs for a group of points centered around the counterfactual $x$. The second term, represented by $\gamma \|x-X_i\|$, is an upper bound on the potential difference in outputs of any new model on the points $x$ and $X_i$ (Recall the Lipschitz property of $M$ around the point $x$).
Using our measure, the guarantee in Theorem \ref{thm:guarantee} can be rewritten as: 
\begin{multline}
\Pr\bigg(\frac{1}{k}\sum_{i=1}^k m(X_i) {-} M(x) {\leq}  \frac{\gamma}{k}\sum_{i=1}^k \|x{-}X_i\| {+} \epsilon\bigg) \\  \geq 1 - \exp{\left(\frac{-k\epsilon^2}{8(\gamma+\gamma_m)^2\sigma^2}\right)}.
\end{multline}
This form of the inequality allows for the following interpretation of Theorem~\ref{thm:guarantee}: The distance between the output of the new model on an input $x$, i.e., $M(x)$, and the average prediction of the neighborhood of the given input by the old model, i.e., $\frac{1}{k}\sum m(X_i)$ is upper bounded by $\epsilon$-corrected, $\gamma$ multiplied average distance of the datapoints within the neighborhood of the input $x$, i.e., $\frac{1}{k}\sum \|x-X_i\|$.

\textit{Proof Sketch:} The complete proof of Theorem~\ref{thm:guarantee} is provided in Appendix~\ref{app:prof_guar}. Here, we include a proof sketch. Notice that, using the Lipschitz property of $M(\cdot)$ around $x$, we have $M(x)  \geq M(X_i)-\gamma\|x-X_i\|$ for all $X_i$. Thus,
\begin{align}
M(x) & \geq \frac{1}{k}\sum_{i=1}^k (M(X_i)-\gamma\|x-X_i\|)  \\
& \overset{(a)}{\geq} \frac{1}{k}\sum_{i=1}^k (m(X_i)-\gamma\|x-X_i\|) -\epsilon,
\end{align}
where (a) holds from Lemma~\ref{lem:concentration_modified} with probability at least $1-\exp{\left(\frac{-k\epsilon^2}{8(\gamma+\gamma_m)^2\sigma^2}\right)}$.
\begin{restatable}[Deviation Bound]{lem}{bound}
\label{lem:concentration_modified}
Let $X_1,X_2,\ldots,X_k \sim \mathcal{N}(x,\sigma^2I_d)$ and $Z{=}\frac{1}{k}\sum_{i=1}^k (m(X_i){-}M(X_i))$. Suppose $|\E{Z|M}- \E{Z}| < \epsilon'.$ Then, under naturally-occurring model change, we have $\E{Z}{=}0$. Moreover, for any $\epsilon {>} 2 \epsilon',$
\begin{equation}
\Pr(Z \geq \epsilon) \leq \exp{\bigg(\frac{-k\epsilon^2}{8(\gamma + \gamma_m)^2\sigma^2}\bigg)}.
\end{equation}
\end{restatable}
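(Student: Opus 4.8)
The plan is to prove Lemma~\ref{lem:concentration_modified} (the Deviation Bound) in two stages: first establish that $\E{Z}=0$ under naturally-occurring model change, and then establish the one-sided concentration inequality for $Z$ around its mean, handling the conditional-vs-unconditional mean gap via the hypothesis $|\E{Z\mid M}-\E{Z}|<\epsilon'$.

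\textbf{Step 1: $\E{Z}=0$.} I would write $\E{Z}=\frac{1}{k}\sum_{i=1}^k \E{m(X_i)-M(X_i)}$ and use the tower rule, conditioning on $X_i$. For a fixed $x_i$, property~\ref{prop1} of Definition~\ref{defn:natural} gives $\E{M(x_i)}=m(x_i)$, so $\E{m(X_i)-M(X_i)\mid X_i=x_i}=0$ for every $x_i$, hence each summand vanishes and $\E{Z}=0$. (One should note the independence/ordering of randomness: $X_i$ is drawn, then $M$ is the randomly changed model; the identity in property~\ref{prop1} is exactly the statement that makes this work pointwise in $x_i$.)

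\textbf{Step 2: Concentration of $Z$ given $M$.} Condition on a realization of the changed model $M$. Then $g(x_1,\ldots,x_k):=\frac{1}{k}\sum_{i=1}^k\big(m(x_i)-M(x_i)\big)$ is a fixed function of the $k$ iid Gaussian vectors $X_1,\ldots,X_k\sim\mathcal N(x,\sigma^2 I_d)$, i.e., a function on $\mathbb{R}^{kd}$ with the product Gaussian measure $\mathcal N(\cdot,\sigma^2 I_{kd})$. Since $m$ is $\gamma_m$-Lipschitz and $M$ is $\gamma$-Lipschitz (property 2 of Definition~\ref{defn:natural}), each map $x_i\mapsto \frac1k(m(x_i)-M(x_i))$ is $\frac{\gamma_m+\gamma}{k}$-Lipschitz in $x_i$; because the coordinates act on disjoint blocks, the Lipschitz constant of $g$ on $\mathbb R^{kd}$ is $L=\sqrt{k}\cdot\frac{\gamma_m+\gamma}{k}=\frac{\gamma_m+\gamma}{\sqrt k}$. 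Applying the Gaussian Lipschitz concentration inequality (Lemma~\ref{gauus}) to $g$ with standard deviation $\sigma$ gives, for the conditional law given $M$,
\begin{equation*}
\Pr\big(Z - \E{Z\mid M} \geq t \,\big|\, M\big) \leq \exp\!\Big(\tfrac{-t^2}{2\sigma^2 L^2}\Big) = \exp\!\Big(\tfrac{-k\,t^2}{2(\gamma_m+\gamma)^2\sigma^2}\Big).
\end{equation*}

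\textbf{Step 3: Removing the conditioning and absorbing $\epsilon'$.} Using $\E{Z\mid M}\leq \E{Z}+\epsilon' = \epsilon'$ (from the hypothesis and Step~1), for $\epsilon>2\epsilon'$ I would set $t=\epsilon-\epsilon'>\epsilon/2$ and write
\begin{equation*}
\Pr(Z\geq\epsilon\mid M) \leq \Pr\big(Z-\E{Z\mid M}\geq \epsilon-\epsilon' \,\big|\, M\big) \leq \exp\!\Big(\tfrac{-k(\epsilon-\epsilon')^2}{2(\gamma_m+\gamma)^2\sigma^2}\Big) \leq \exp\!\Big(\tfrac{-k\epsilon^2}{8(\gamma_m+\gamma)^2\sigma^2}\Big),
\end{equation*}
where the last step uses $(\epsilon-\epsilon')^2 > \epsilon^2/4$. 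Taking expectation over $M$ (the bound is uniform in $M$) yields the unconditional statement $\Pr(Z\geq\epsilon)\leq \exp(-k\epsilon^2/(8(\gamma+\gamma_m)^2\sigma^2))$.

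The main obstacle I anticipate is bookkeeping the sources of randomness cleanly — $Z$ depends on both the Gaussian sample and the random model $M$, and the Lipschitz concentration tool applies only to a deterministic Lipschitz function of the Gaussians. The fix is to condition on $M$ throughout Step~2, note that the Lipschitz constant $\gamma_m+\gamma$ is a uniform bound that does not depend on the realization of $M$ (property 2 of Definition~\ref{defn:natural} is stated precisely so that $\gamma$ is model-independent), and only then integrate out $M$. The secondary technical point is the computation of the Lipschitz constant of $g$ on the product space $\mathbb R^{kd}$: the $1/k$ averaging contributes $1/k$ per block but the Euclidean norm over $k$ independent blocks contributes a $\sqrt k$, giving the net $1/\sqrt k$ that produces the $k$ in the exponent. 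Everything else (the $\E{Z}=0$ identity, the crude inequality $(\epsilon-\epsilon')^2>\epsilon^2/4$) is routine.
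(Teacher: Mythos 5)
Your proposal is correct and follows essentially the same route as the paper: show $\E{Z}=0$ via the tower rule and property 1 of Definition~\ref{defn:natural}, condition on $M$, bound the Lipschitz constant of $Z$ as a function of the stacked Gaussian sample by $(\gamma_m+\gamma)/\sqrt{k}$ via Cauchy--Schwarz, apply Lemma~\ref{gauus}, and absorb the $\epsilon'$ gap using $(\epsilon-\epsilon')^2>\epsilon^2/4$. The only cosmetic difference is that the paper integrates out $M$ over an assumed discrete model class via the law of total probability, whereas you simply note the conditional bound is uniform in $M$ and take expectation, which is the same argument.
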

\begin{proof}[Proof Sketch:]
The proof of Lemma~\ref{lem:concentration_modified} leverages concentration bounds for Lipschitz functions of independent Gaussian random variables (see Lemma~\ref{lem:concentration_actual}). The complete proof of Lemma~\ref{lem:concentration_modified} is provided in Appendix~\ref{apxproof}.
\end{proof}
\begin{restatable}[Gaussian Concentration Inequality]{lem}{gaus}\label{gauus}
Let $W=(W_1,W_2,\ldots,W_n)$ consist of $n$ i.i.d. random variables belonging to $\mathcal{N}(0,\sigma^2)$, and $Z=f(W)$ be a $\gamma$-Lipschitz function, i.e., $ |f(W)- f(W')|\leq \gamma \|W-W'\|.$
Then, we have,
\begin{equation}
\Pr(Z-\E{Z}\geq \epsilon) \leq \exp{\left(\frac{-\epsilon^2}{2\gamma^2\sigma^2}\right)} \text{ for all }\epsilon>0.
\end{equation}\label{lem:concentration_actual}
\end{restatable}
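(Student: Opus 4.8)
The plan is to prove the sharp sub-Gaussian tail bound via the classical Herbst argument built on the Gaussian logarithmic Sobolev inequality. First I would make two harmless reductions: (i) rescale so that $\sigma=1$, replacing $f$ by $\tilde f(w)=f(\sigma w)$, which is $\gamma\sigma$-Lipschitz whenever $f$ is $\gamma$-Lipschitz, so that the final bound picks up the factor $\sigma^2$; and (ii) assume $f$ is smooth with $\|\nabla f\|\leq\gamma$ everywhere, the general Lipschitz case following by mollification (convolving $f$ with a narrow smooth kernel preserves the Lipschitz constant and converges pointwise, so the tail bound passes to the limit). I may also center, i.e.\ assume $\E{f(W)}=0$, since only $Z-\E{Z}$ appears.

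Next I would invoke the Gaussian log-Sobolev inequality: for the standard Gaussian measure on $\mathbb{R}^n$ and any smooth $g$, one has $\E{g(W)^2\log g(W)^2}-\E{g(W)^2}\log\E{g(W)^2}\leq 2\,\E{\|\nabla g(W)\|^2}$. This is itself a standard fact — either cited, or derived by tensorizing the one-dimensional case, which in turn follows from the two-point (Bernoulli) log-Sobolev inequality together with the central limit theorem.

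Then comes the Herbst argument. Let $H(\lambda)=\E{e^{\lambda f(W)}}$ and apply the log-Sobolev inequality to $g=e^{\lambda f/2}$. Using $\nabla g=\tfrac{\lambda}{2}g\nabla f$ and $\|\nabla f\|\leq\gamma$, the right-hand side is at most $\tfrac{\lambda^2\gamma^2}{2}H(\lambda)$, while the left-hand side equals $\lambda H'(\lambda)-H(\lambda)\log H(\lambda)$. Writing $K(\lambda)=\log H(\lambda)$ and dividing through by $\lambda^2 H(\lambda)$, this rearranges to $\tfrac{d}{d\lambda}\!\big(K(\lambda)/\lambda\big)\leq\gamma^2/2$. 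Since $K(\lambda)/\lambda\to K'(0)=\E{f(W)}=0$ as $\lambda\downarrow 0$, integrating yields $K(\lambda)\leq\lambda^2\gamma^2/2$ for $\lambda>0$, i.e.\ $\E{e^{\lambda(Z-\E{Z})}}\leq e^{\lambda^2\gamma^2/2}$. A Chernoff bound $\Pr(Z-\E{Z}\geq\epsilon)\leq e^{-\lambda\epsilon}\E{e^{\lambda(Z-\E{Z})}}\leq e^{-\lambda\epsilon+\lambda^2\gamma^2/2}$, optimized at $\lambda=\epsilon/\gamma^2$, gives $e^{-\epsilon^2/(2\gamma^2)}$; undoing the rescaling replaces $\gamma^2$ by $\gamma^2\sigma^2$, which is exactly the claim.

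The main obstacle is really the log-Sobolev inequality itself — everything after it is bookkeeping, and the smoothing step is routine. If one prefers a fully self-contained argument avoiding log-Sobolev, an alternative is the interpolation trick: with $W'$ an independent copy of $W$, write $f(W)-f(W')=\int_0^{\pi/2}\langle\nabla f(W_\theta),\dot W_\theta\rangle\,d\theta$ along the path $W_\theta=W\sin\theta+W'\cos\theta$, apply Jensen twice, and use that $(W_\theta,\dot W_\theta)$ are independent standard Gaussians for each fixed $\theta$; this is elementary but only delivers a bound of the form $\exp\!\big(-2\epsilon^2/(\pi^2\gamma^2\sigma^2)\big)$, with a worse numerical constant than the stated $1/(2\gamma^2\sigma^2)$. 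So to match the statement exactly, the log-Sobolev/Herbst route (or the equivalent Ornstein–Uhlenbeck semigroup interpolation, which recovers the sharp constant) is the one I would carry out.
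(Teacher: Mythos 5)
Your proposal is correct: the paper does not prove this lemma itself but simply cites Boucheron--Lugosi--Massart (p.~125), and the log-Sobolev/Herbst argument you carry out is precisely the standard proof given there, with all steps (rescaling, smoothing, the entropy identity, the differential inequality for $K(\lambda)/\lambda$, and the Chernoff optimization) executed correctly. Your remark that the Pisier-type interpolation route only yields the weaker constant $2/\pi^2$ in place of $1/2$ is also accurate.
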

For the proof of Lemma~\ref{gauus} refer to \citet{boucheron2013concentration} in p.125. Our robustness guarantee (Theorem~\ref{thm:guarantee}) essentially states that $\Pr(M(x)\leq R_{k,\sigma^2}(x,m)-\epsilon) \leq \exp{\big(\frac{-k\epsilon^2}{8(\gamma+\gamma_m)^2\sigma^2}\big)}$ under naturally-occurring model change. For instance, if we find a counterfactual $x$ such that $R_{k,\sigma^2}(x,m)-\epsilon$ is greater or equal to $0.5$, then $M(x)$ would also be greater than $0.5$ with high probability. The term $\exp{\big(\frac{-k\epsilon^2}{8(\gamma+\gamma_m)^2\sigma^2}\big)}$ decays with $k$.

\subsubsection{Practical Relaxation of Stability and Its Properties}
\begin{figure*}
     \centering
    \begin{subfigure}[t]{0.16\textwidth}
        \raisebox{-\height}{\includegraphics[width=\textwidth]{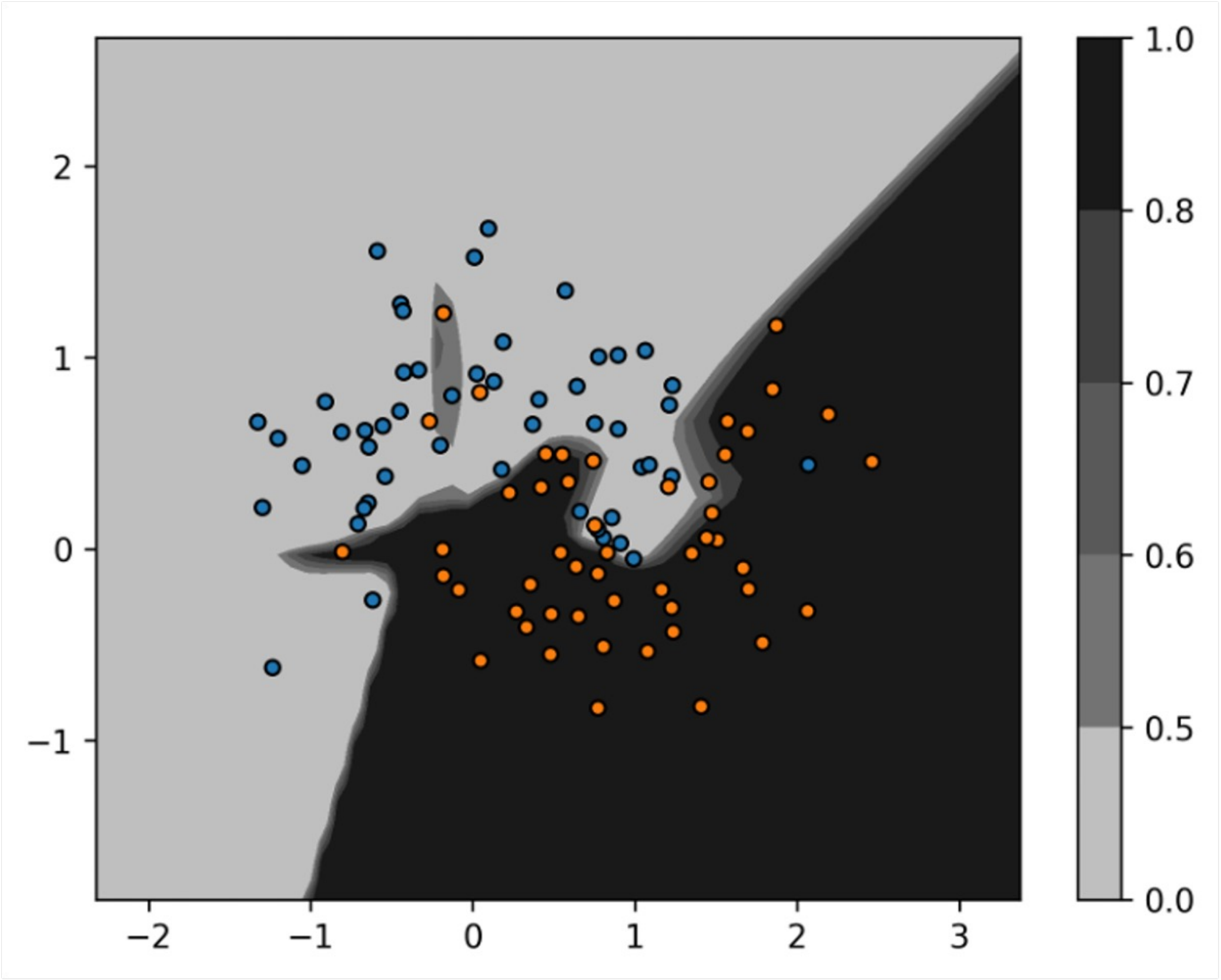}}
        \caption{$m$}
    \end{subfigure}
    \begin{subfigure}[t]{0.16\textwidth}
        \raisebox{-\height}{\includegraphics[width=\textwidth]{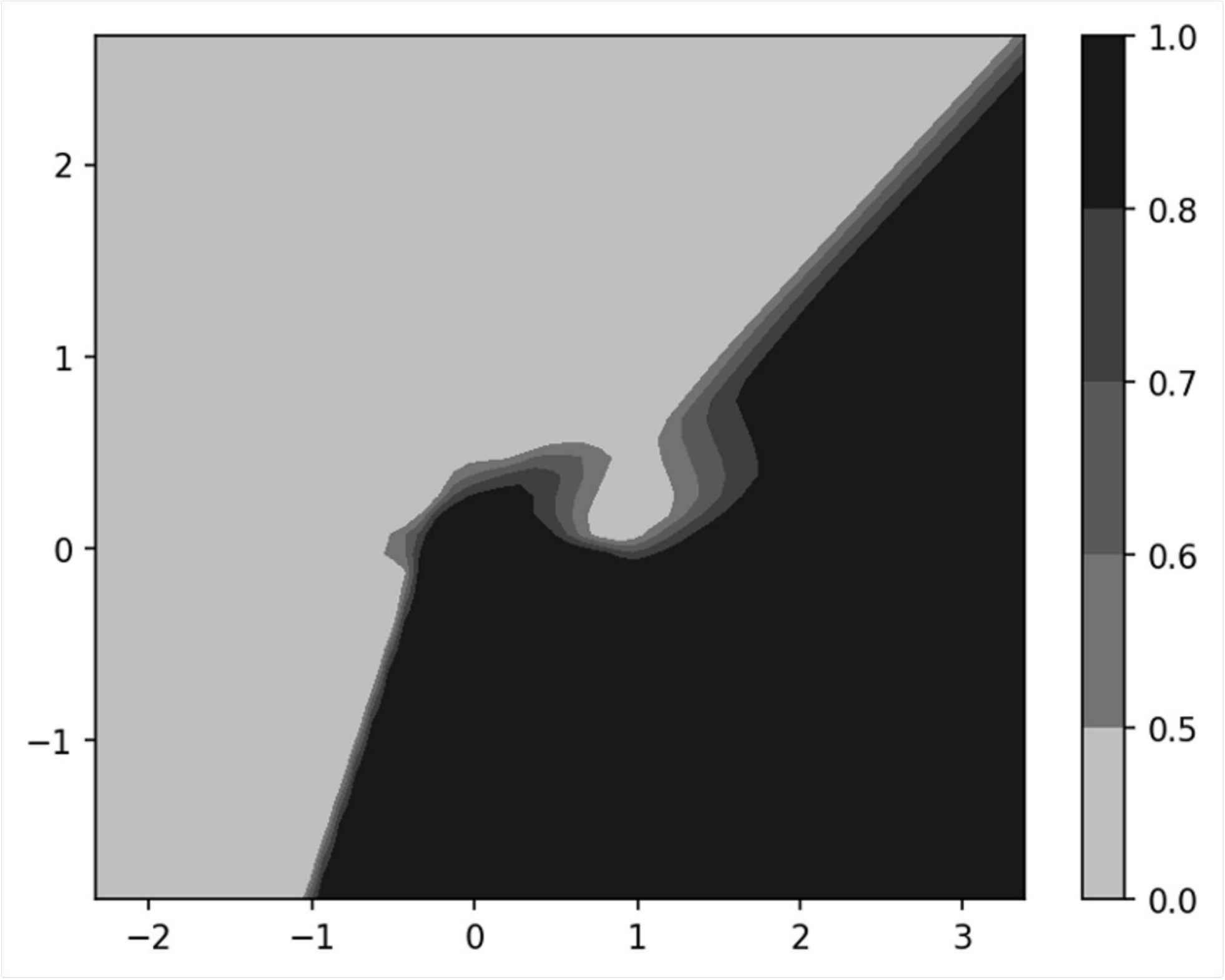}}
        \caption{$M=m_1$}
    \end{subfigure}
    \begin{subfigure}[t]{0.16\textwidth}
        \raisebox{-\height}{\includegraphics[width=\textwidth]{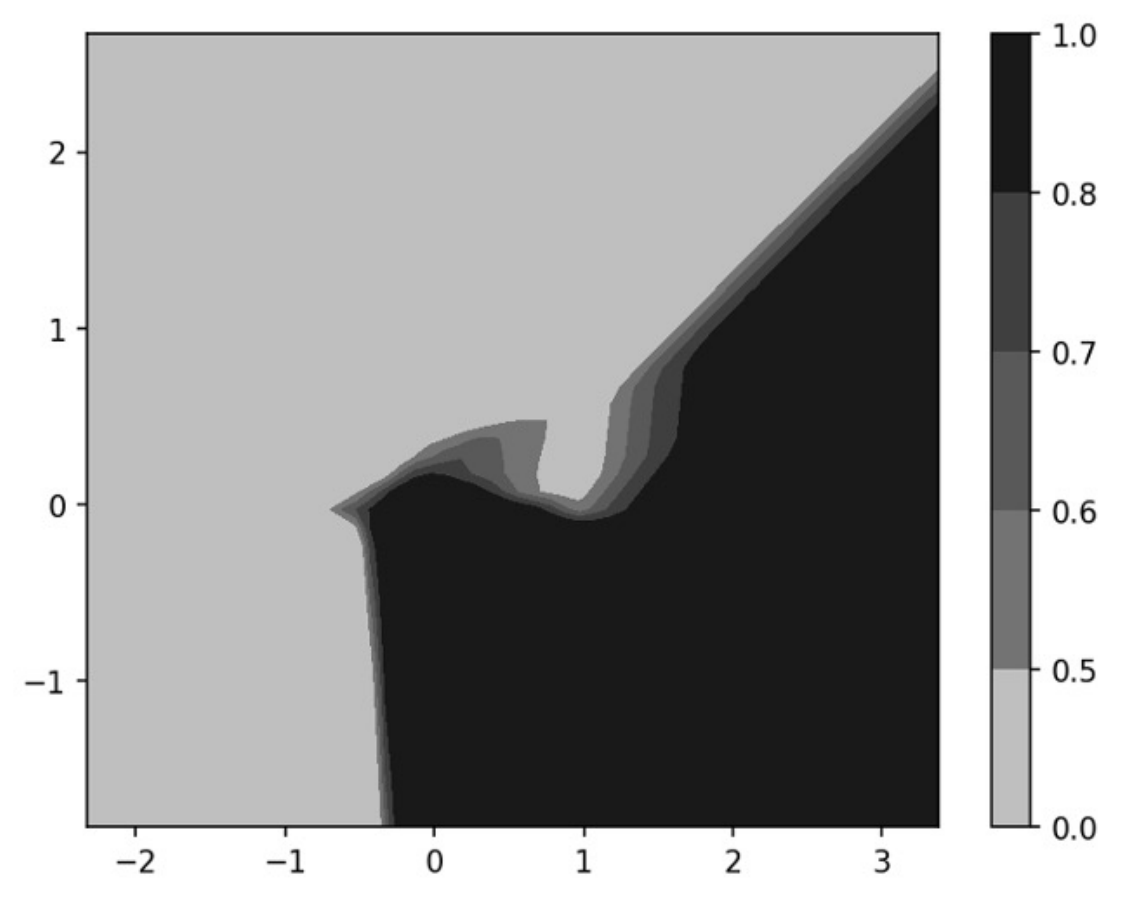}}
        \caption{$M=m_2$}
    \end{subfigure}
    \begin{subfigure}[t]{0.16\textwidth}
        \raisebox{-\height}{\includegraphics[width=\textwidth]{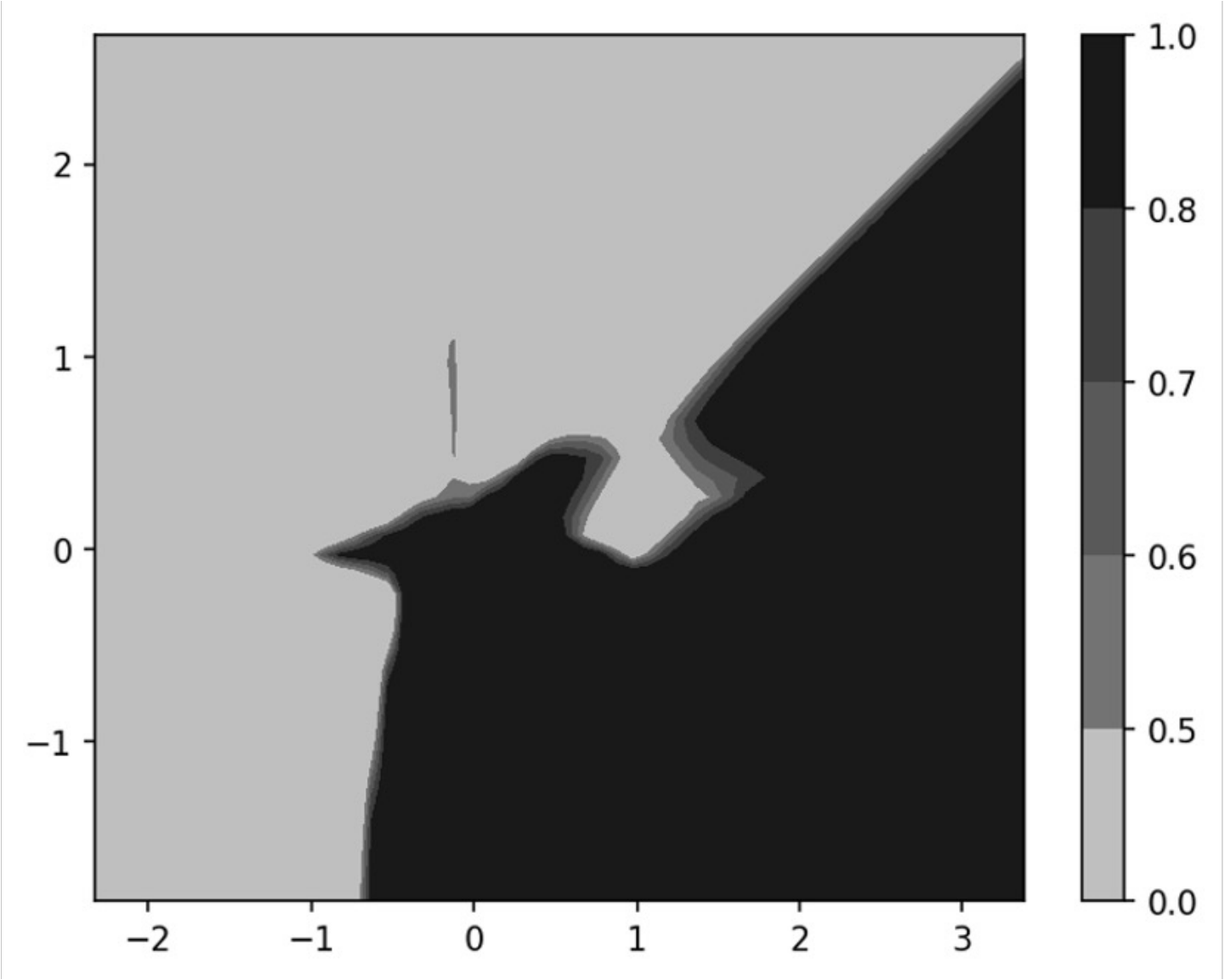}}
        \caption{$M=m_3$}
    \end{subfigure}
    \begin{subfigure}[t]{0.16\textwidth}
        \raisebox{-\height}{\includegraphics[width=\textwidth]{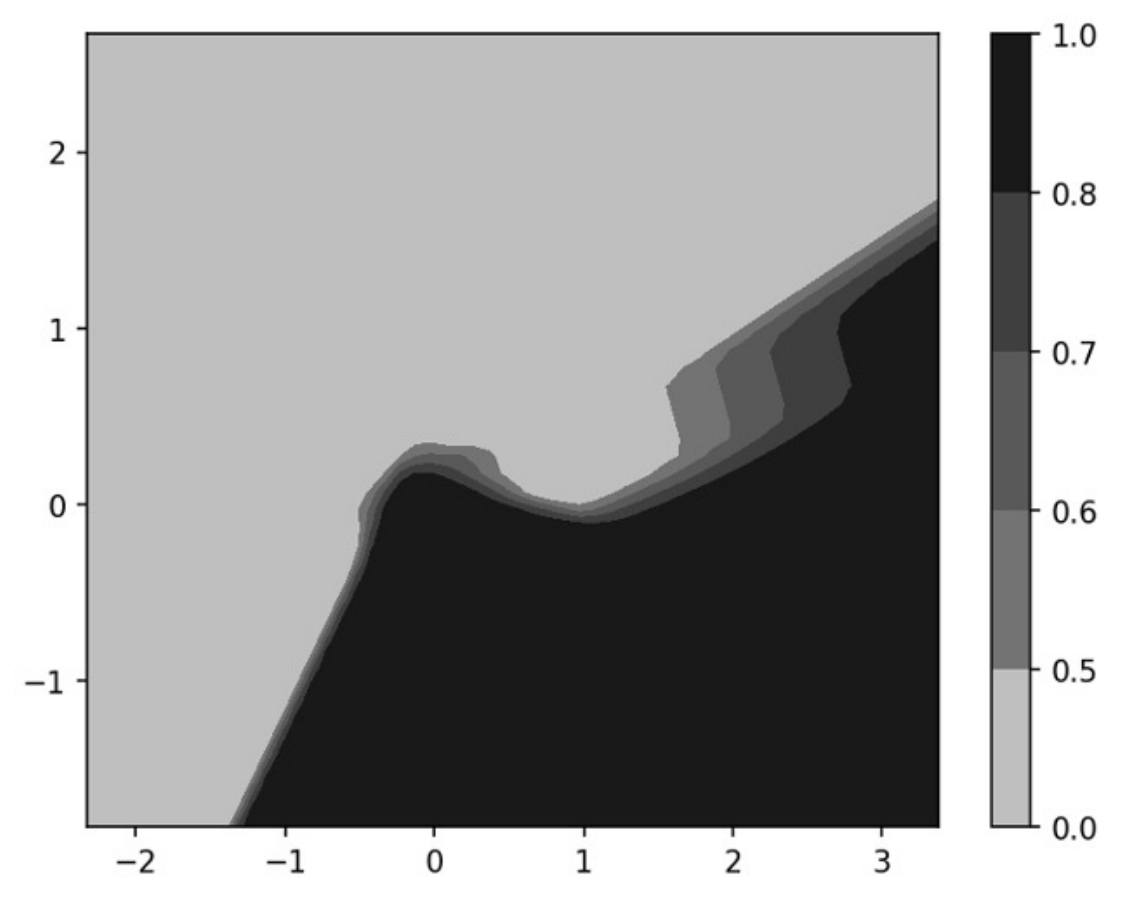}}
        \caption{$M=m_4$}
    \end{subfigure}
    \begin{subfigure}[t]{0.165\textwidth}
        \raisebox{-\height}{\includegraphics[width=\textwidth]{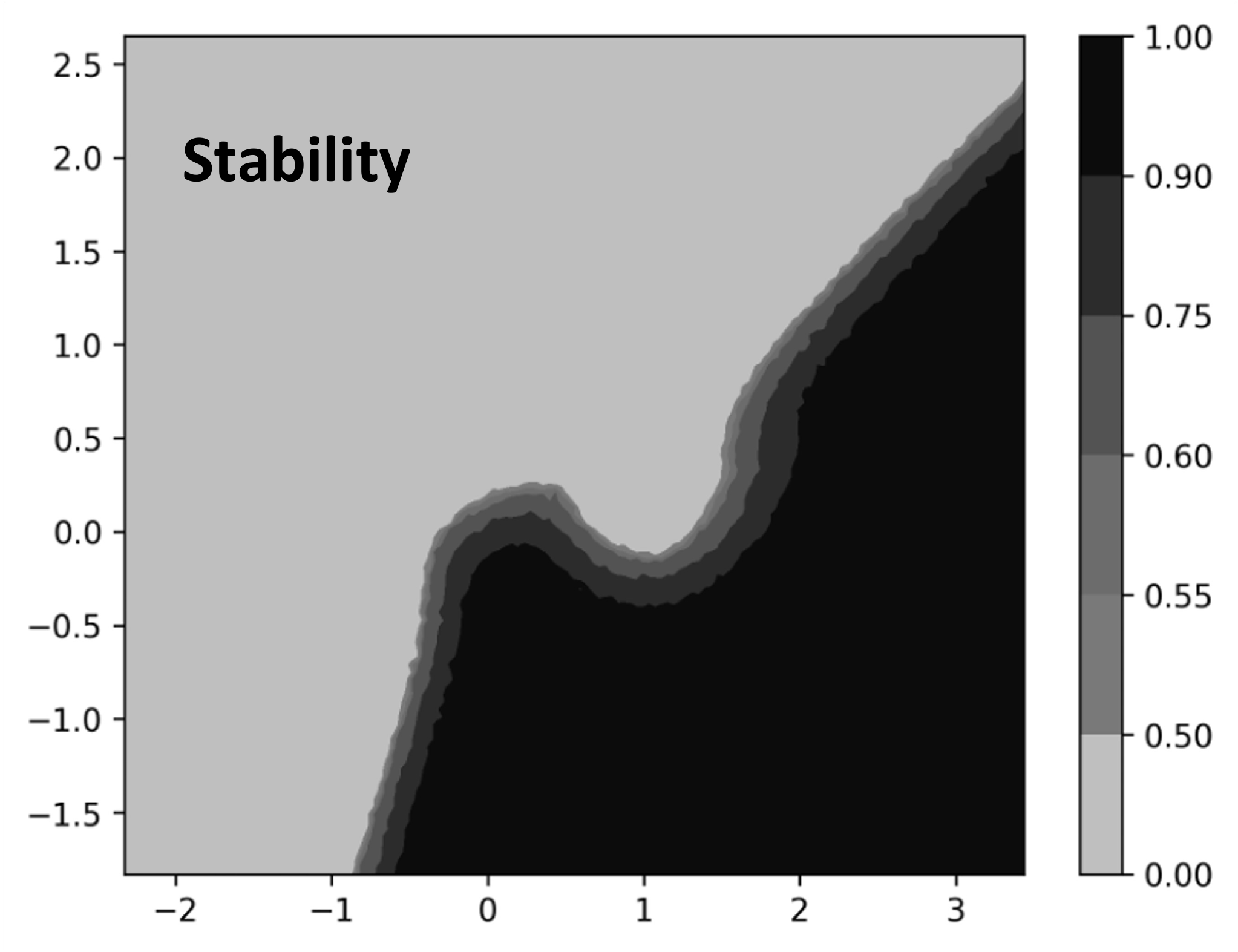}}
        \caption{{\small $\hat{R}_{k,\sigma^2}(x,m)$}}
    \end{subfigure}
     \caption{Effect of stability measure on naturally-occurring model changes: (a) corresponds to the original data distribution and the trained model. (b)-(e) demonstrate some examples of changed models obtained on retraining with different weight initializations. One may notice that the model decision boundary is changing a lot in the sparse regions of the data-manifold (few data-points), possibly violating the bounded-parameter change assumption but the predictions on the dense regions of the data-manifold do not change much (in alignment with Rashomon effect). This motivates our proposed abstraction of naturally-occurring model change which allows for arbitrary changes in the parameter space with little change in the actual predictions on the dense regions of the data manifold. (f)  demonstrates our proposed measure of stability $\hat{R}_{k,\sigma^2}(x,m)$ (high mean model output, low variability, \emph{almost} like a Gaussian filter) for which we derive probabilistic guarantees on validity. In essence, we show that under the abstraction of naturally-occurring model change, the stability measure captures the reliable intersecting region of changed models with high probability. In the original model, we observe that certain non-robust regions (i.e., those caused by overfitting to certain data points in the original model) have higher local Lipschitz values and variability. Counterfactuals assigned to these regions (even if $m(x)$ is high) would be invalidated in the changed models. The stability measure, which samples around a region, penalizes these higher local Lipschitz values. }
    \label{fig:matv}
\end{figure*}

While our proposed measure Stability (Definition~\ref{prop:robustness}) has probabilistic guarantees, we note that it relies on the Lipschitz constant $\gamma$ (or the local Lipschitz constant $\gamma_x$ around the point $x$), which is often unknown. Therefore, we next propose a practical relaxation of the measure as follows: 
\begin{defn}[Stability (Relaxed)] The stability (relaxed) of a counterfactual $x\in \mathbb{R}^d$ is defined as follows: 
\begin{align*}
\hat{R}_{k,\sigma^2}(x,m) =  \frac{1}{k}\sum_{x_i \in N_{x,k}}(m(x_i)  - \left|m(x) - m(x_i)\right|),
\end{align*}
where $N_{x,k}$ is a set of $k$ points  drawn from the Gaussian distribution $\mathcal{N}(x,\sigma^2\mathrm{I}_{d})$ with $\mathrm{I}_{d}$ being the identity matrix. 
\label{prop:robustness2}
\end{defn}
 To arrive at this relaxation, we utilize the Lipschitz property to approximate the aspect that involves the Lipschitz constant, specifically, by approximating $\gamma_x||x-x_i||$ with $|m(x)-m(x_i)|$. Another possibility is to consider an estimate of $\gamma_x$ given by:
\begin{equation}
  \label{eqn:relaxed_max}
\hat{\gamma}_x= \max_{x_i \in N_{x,k}} \frac{|m(x)-m(x_i)|}{\|x-x_i\|}.\end{equation}
We observed that the experimental results with both these stability estimates are in the same ballpark.

To gain a deeper understanding of the relaxed stability measure, we now consider some desirable properties of counterfactuals that make them more robust and then demonstrate that our proposed relaxation of Stability (Definition~\ref{prop:robustness2}) satisfies those desirable properties. These properties are inspired from~\citet{dutta2022robust} which proposed these properties for tree-based ensembles. The first property is based on the fact that the output of a model $m(x) \in [0,1]$ is expected to be higher if the model has more confidence in that prediction. 

\begin{propty}
For any $x \in \mathbb{R}^d$, a higher value of $m(x)$ makes it less likely to be invalidated due to model changes.
\label{propty:high_confidence}
\end{propty}
Having a high $m(x)$ alone does not guarantee robustness, as local variability around $x$ can make predictions less reliable. E.g., points with high $m(x)$ near the decision boundary are also vulnerable to invalidation with model changes. 

\begin{propty}
An $x {\in} \mathbb{R}^d$ is less likely to be invalidated if several points close to $x$ (denoted by $x'$) have a high value of $m(x')$.
\label{propty:high_confidence_mean}
\end{propty}
Counterfactuals may also be more likely to be invalidated if it lies in a highly variable region of the model output function. This is because the confidence of the model predictions in that region may be less reliable. 
\begin{propty}
An $x \in \mathbb{R}^d$ is less likely to be invalidated if model outputs around $x$ have low variability.
\label{propty:data_manifold}
\end{propty}

 Our stability measure aligns with these desired properties. Given a point $x \in \mathbb{R}^d$, it generates a set of $k$ points centered around $x$. The first term $\frac{1}{k} \sum_{x^{\prime} \in N_{x,k}} m\left(x^{\prime}\right)$ is expected to be high if the model output value $m(x)$ is high for $x$ as well as several points close to $x$. But the mean value of $m(x')$ around a point $x$ may not always capture the variability in that region, hence, the second term of our stability measure, i.e., $ \frac{1}{k}\sum_{x' \in N_{x,k}}\left|m(x) - m(x')\right|$. This term captures the variability of the model output values in a region around $x$. 
 
 It is worth noting that the variability term is only useful in conjunction with the mean term. This is because even points on the opposite side of the decision boundary can have varying levels of variability, regardless of whether $m(x')$ is less or greater than $0.5$.

In Fig. \ref{fig:matv}, we provide an example on the synthetic moon dataset to observe the effect of our stability measure on naturally-changed models. Note that these changed models were realized from actual experiments by retraining with different weight initializations.

\subsection{Impossibility Under Targeted Model Change} \label{subsec:Impossibility}
In this work, we make a key distinction between naturally-occurring and targeted model changes. While we are able to provide probabilistic guarantees for naturally-occurring model change, we also demonstrate an impossibility result for targeted model change. 

\begin{theorem}[Impossibility Under Targeted Change]
\label{thm:impossibility}
Given a model and a counterfactual, one can design another similar model such that the particular targeted counterfactual can be invalidated. 
\end{theorem}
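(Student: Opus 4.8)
The statement is a constructive impossibility result, so the plan is to exhibit, for an arbitrary model $m(\cdot)$ and an arbitrary counterfactual $x'$ (with $m(x') \geq 0.5$), an explicit new model $M(\cdot)$ that agrees with $m$ everywhere except in a tiny neighborhood of $x'$, yet flips the decision at $x'$. First I would fix the target $x'$ and a radius $r>0$, and define $M(z) = m(z) + \phi(z)$, where $\phi$ is a ``bump'' function supported on the ball $B(x',r)$ with $\phi(x')$ large enough (in magnitude, and with the right sign) to push $M(x')$ below $0.5$ — for instance $\phi(z) = -c\,\psi\!\big(\|z-x'\|/r\big)$ for a smooth cutoff $\psi$ with $\psi(0)=1$, $\psi(t)=0$ for $t\geq 1$, and $c$ chosen so that $m(x') - c < 0.5$. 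One must clip $M$ to stay in $[0,1]$, which is harmless since clipping only decreases the perturbation and preserves Lipschitzness.

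Next I would verify that $M$ is ``similar'' to $m$ in the precise senses the paper cares about, to make the result bite against the naturally-occurring abstraction. Specifically: (i) $M$ and $m$ differ only on $B(x',r)$, so on any finite dataset $\mathcal{S}$ that avoids this ball (generically true for a measure-zero point and small $r$) we have $M(x_i)=m(x_i)$ for all $i$, hence identical predictions and accuracy; (ii) $\|M-m\|_\infty \leq c$ can be made arbitrarily small, and in fact we can instead keep $c$ moderate but shrink $r\to 0$; (iii) $M$ inherits a Lipschitz bound $\gamma_M \leq \gamma_m + \mathrm{Lip}(\phi)$, and $\mathrm{Lip}(\phi) = c\,\mathrm{Lip}(\psi)/r$. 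Here is the crux: to flip the decision we need $c \geq m(x') - 0.5 \geq 0$ bounded away from $0$ in the worst case (when $m(x')$ is close to $1$), so as $r\to 0$ the local Lipschitz constant of $M$ around $x'$ blows up like $1/r$. This is exactly why the construction is ``targeted'' and not ``naturally-occurring'': the spike at $x'$ is a high-variability, high-local-Lipschitz region, the kind that Definition~\ref{defn:natural} and the Stability measure are designed to exclude. I would state this explicitly as the reason the impossibility does not contradict Theorem~\ref{thm:guarantee}.

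The main obstacle — really the only subtle point — is pinning down what ``similar'' should mean so the theorem is both true and meaningful. If ``similar'' is read as ``bounded in parameter space'' (the assumption the paper criticizes), then for a generic neural architecture one should argue that a localized bump can be realized, or approximated to arbitrary accuracy, by a small weight perturbation; alternatively one sidesteps this by working at the level of functions and defining similarity as small $\|M-m\|_{L^\infty(\mathcal{X})}$ or agreement on $\mathcal{S}$. I would take the function-level route for cleanliness, note the neural-network realizability as a remark, and emphasize that no matter how one quantifies closeness of $M$ to $m$ on the data, the single point $x'$ can always be sabotaged — which is the honest content of the result and the motivation for restricting attention to naturally-occurring changes.

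\begin{proof}[Proof sketch]
Fix $x' \in \mathbb{R}^d$ with $m(x')\geq 0.5$. Choose a smooth cutoff $\psi:[0,\infty)\to[0,1]$ with $\psi(0)=1$ and $\psi(t)=0$ for $t\geq 1$, and for $r>0$ set $\phi_r(z) = -\big(m(x')-0.5+\eta\big)\,\psi\!\big(\|z-x'\|/r\big)$ for a small $\eta>0$. Define $M(z) = \mathrm{clip}_{[0,1]}\big(m(z)+\phi_r(z)\big)$. Then $M(z)=m(z)$ for all $z\notin B(x',r)$, so $M$ and $m$ have identical predictions and accuracy on any dataset not meeting $B(x',r)$; moreover $\|M-m\|_\infty \leq m(x')-0.5+\eta \leq 0.5+\eta$ and $M$ is $\gamma_m + O(1/r)$-Lipschitz. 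However $M(x') = m(x') - (m(x')-0.5+\eta) = 0.5-\eta < 0.5$, so $x'$ is invalidated under $M$. Taking $r$ small makes $M$ agree with $m$ on essentially all of the data manifold while still flipping the targeted point; the price is a diverging local Lipschitz constant near $x'$, which is precisely why such $M$ falls outside the naturally-occurring model changes of Definition~\ref{defn:natural}.
\end{proof}
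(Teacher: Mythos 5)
Your proposal is correct and follows essentially the same route as the paper, which proves this result by the one-line construction $M(x)=m(x)$ almost everywhere except at or around the targeted point $x'$, where $M(x')=1-m(x')$. Your bump-function version is a more careful instantiation of that same idea, and your added observation that the local Lipschitz constant near $x'$ must blow up as the support shrinks (explaining why such $M$ falls outside Definition~\ref{defn:natural}) is a worthwhile elaboration the paper leaves implicit.
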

What this result essentially demonstrates is that for a given model, one can design another similar model such that any particular targeted counterfactual can be invalidated. The proof relies on the possibility that one could have a new model $M(x)=m(x)$ almost everywhere except at or around the targeted point $x'$, i.e., $M(x')=1-m(x')$. 

\section{Generating Robust Counterfactuals using Our Proposed Measure: Stability} 
In this section, we examine two techniques of incorporating our proposed measure, Stability (relaxed; see Definition~\ref{prop:robustness2}), for generating robust counterfactuals for neural networks.

To begin, along the lines of \citet{dutta2022robust}, we first define a counterfactual robustness test.
\begin{defn}[Counterfactual Robustness Test] A counterfactual $x \in \mathbb{R}^d$ satisfies the robustness test if:
\begin{align}
    \hat{R}_{k,\sigma^2}(x,m)
    \geq \tau.
\end{align}
\end{defn}
Now, we would like to find a reasonable point $x'$ that receives a positive prediction from the model (essentially $m(x')\geq0.5$), while also satisfying the robustness test, $\hat{R}_{k,\sigma^2}(x',m) \geq \tau$. The threshold value of $\tau$ can be adjusted based on the desired effective validity (recall Theorem~\ref{thm:guarantee}). Hence, a larger threshold would likely ensure that the new model, $M$, remains valid with high probability. In trying to find a reasonable point $x'$, one may strive to generate robust counterfactuals that are as close as possible to the original point. One might also want the generated counterfactuals to be as realistic as possible, i.e., lie on the data manifold. Toward that end, we propose two algorithms.

We propose Algorithm~\ref{alg:ReX}, T-Rex:I, which incorporates our measure to find robust counterfactuals that are close to the original data point. T-Rex:I works with any preferred base method for generating counterfactuals. It evaluates the stability of the generated counterfactual and, if necessary, iteratively updates the generated counterfactual through a gradient ascent process until a robust counterfactual that meets the desired criteria is obtained.

\begin{rem}[Gradient of Stability]
In Algorithm \ref{alg:ReX}, we compute the gradient of $R(x,m)$ with respect to $x$ (not model parameters $m$). Such gradients w.r.t. $x$ instead of m are also computed commonly in adversarial machine learning and also in feature-attributions for explainability. We use TensorFlow \texttt{tf.GradientTape} for automatic differentiation, which allows for the computation of gradients with respect to certain inputs.
\end{rem}

To ensure that the counterfactuals are as realistic as possible, we also define the T-Rex:NN Counterfactual, which considers counterfactuals that lie within a dataset to avoid any unrealistic or anomalous results (see Algorithm~\ref{alg:LOF}).

\begin{defn}[Robust Nearest Neighbor Counterfactual]\label{defn:DSRC} Given $x \in \mathbb{R}^d$ such that $m(x)<0.5$, its robust nearest neighbor counterfactual $\mathcal{C}^{(\tau)}_{p, \mathcal{S}}(x, m)$ with respect to the model $m(\cdot)$ and dataset $\mathcal{S}$ is defined as another point $x^{\prime} \in \mathcal{S}$ that minimizes the $l_p$ norm $\left\|x-x^{\prime}\right\|_p$ such that $m\left(x^{\prime}\right)\geq0.5$ and $\hat{R}_{k,\sigma^2}(x',m) \geq \tau $.

\end{defn}
\begin{algorithm}[t]
\begin{algorithmic}\caption{T-Rex:I: Theoretically Robust EXplanations: Iterative Version}\label{alg:ReX}

   \STATE {\bfseries Input:} Model $m(\cdot)$, Datapoint $x$ with $m(x)<0.5$, \\Algorithm parameters ($k, \sigma^2, \eta, \tau,$ max$\_$steps).
   \STATE Generate initial counterfactual $x'$ using any technique. \\ Initialize robust counterfactual $x_c=x'$ and steps = 0.
   \WHILE{$\hat{R}_{k,\sigma^2}(x_c,m) < \tau$ and steps $<$ max$\_$steps }
   \STATE{Compute $\hat{R}_{k,\sigma^2}(x_c,m)$ \\Compute gradient $\nabla_{x_c}\hat{R}_{k,\sigma^2}(x_c,m)$}
   \STATE{Update $x_c$ via gradient ascent: \\\hspace{0.5cm} $x_c=x_c+\eta \nabla_{x_c}\hat{R}_{k,\sigma^2}(x_c,m)$}
   \STATE{Increment steps}
   \ENDWHILE
   \STATE{Output $x_c$ and exit}
\end{algorithmic}
\end{algorithm}

The closest data-supported counterfactual serves as a reliable reference, as it inherently has a high Local Outlier Factor (LOF). The T-Rex:I algorithm may find counterfactuals with lower costs, but they may compromise on the LOF and result in unrealistic samples. 

To address this, we propose Algorithm~\ref{alg:LOF}, T-Rex:NN, for finding data-supported counterfactuals. The algorithm first finds $k$ nearest neighbors counterfactuals of $x$ in dataset $\mathcal{S}$, checks through each of them to see if they satisfy the robustness test, $\hat{R}_{k,\sigma^2}(x',m) \geq \tau$, and terminates once such a counterfactual is found.

\section{Experiments}
\label{sec:experiments}

\begin{algorithm}[t]
\begin{algorithmic}\caption{T-Rex:NN: Theoretically Robust EXplanations: Nearest Neighbor Version}\label{alg:LOF}
   \STATE {\bfseries Input:} Model $m(\cdot)$, Datapoint $x$ with $m(x){<}0.5$, Dataset $\mathcal{S}$, Algorithm parameters ($K, \sigma^2, k, \tau$).
   \STATE{Find $K$ nearest neighbor counterfactuals $x_i' \in \mathcal{S}$ to $x$ with respect to model $m(\cdot)$, i.e., $\mathrm{NN}_x = (x_1',x_2',\ldots,x_K')$}.
   \FOR{ $x_i' \in \mathrm{NN}_x$}
   \STATE{Perform counterfactual robustness test on $x_i'$:\\ \hspace{0.5cm} Check if   $\hat{R}_{k,\sigma^2}(x_i',m) \geq \tau$}
   \IF{counterfactual robustness test is satisfied:}
   \STATE{Output $x_i'$ and exit}
    \ENDIF
    \ENDFOR
    \STATE{Output: \textit{No robust counterfactual found} and exit }
\end{algorithmic}
\end{algorithm}

In this section, we present experimental results to demonstrate the effectiveness of our proposed measure in capturing robustness and then generating robust counterfactuals that remain valid after potential model changes. We illustrate how our proposed Algorithm~\ref{alg:ReX} \& \ref{alg:LOF} utilizes our stability measure to effectively generate robust counterfactuals \footnote{Implementation is available at \url{https://github.com/FaisalHamman/TReX-Counterfactuals}}.

\textbf{Datasets:} We conduct experiments on several benchmark datasets, namely, HELOC~\cite{fico2018a}, German Credit, Cardiotocography (CTG), Adult~\cite{Dua:2019}, and Taiwanese Credit~\cite{YEH20092473}. These have two classes, with one class representing the most favorable outcome, and the other representing the least desirable outcome for which we aim to generate counterfactuals. For simplicity, we normalize the features to lie between $[0,1]$. 
\begin{table*}[t]
\caption{Experimental results.}
\label{exphel}
\centering
\begin{small}
\begin{tabular}{clcccccccc} 
\toprule
&& \multicolumn{4}{c}{\textit{$l_1$ based}} & \multicolumn{4}{c}{\textit{$l_2$ based}}  \\ 
\cmidrule(lr){3-6} \cmidrule(lr){7-10}
& Method & COST & LOF  & WI VAL. & LO VAL. & COST & LOF  & WI VAL. & LO VAL.   \\ 
\midrule
\multirow{5}{*}{\rotatebox[origin=c]{90}{{HELOC}}}
& min Cost & 0.40&0.49 & 38.8\% & 35.2\% & 0.11 & 0.75 & 13.5\%  & 13.5\%  \\
& min Cost+T-Rex:I (Ours) & 1.02& 0.38& 98.2\%  & 98.1\% & 0.29 & 0.68 & 98.5\%  & 98.2\%  \\
& min Cost+SNS&  1.20 & 0.30& 98.0\%  & 97.8\% & 0.31 & 0.64 & 97.9\%  & 97.0\%  \\ 
& ROAR &  1.69&0.41 & 92.6\% & 91.2\% & 1.91 & 0.43& 86.3 \% &  84.8\% \\ 
\cmidrule(lr){2-10}
& NN & 1.91 &0.80& 51.1\%  & 50.3\% & 0.56 & 0.80 & 51.1\%  & 50.3\%  \\
& T-Rex:NN (Ours) &  2.50 &0.92& 84.0\%  & 84.0\% & 0.77 & 0.92 & 84.0\%  & 84.0\%  \\
\midrule
\multirow{5}{*}{\rotatebox[origin=c]{90}{{GERMAN}}}
& min Cost & 1.42 &0.77& 58.8\%  & 56.7\% & 0.48 & 0.81 & 26.6\%  & 26.6\%  \\
& min Cost+T-Rex:I (Ours) & 4.81 &0.72& 98.0\%  & 96.5\% & 1.20 & 0.75 & 99.2\%  & 98.7\%  \\
& min Cost+SNS & 5.71 & 0.67& 97.5\%  & 98.1\% & 1.44 & 0.68 & 99.9\%  & 98.9\%  \\ 
& ROAR & 7.63 &0.54& 96.3\% & 92.3\% &6.81  & 0.58 &87.8\%   &85.2\%   \\ 
\cmidrule(lr){2-10}
& NN & 7.05 &1.00& 95.3\%  & 95.4\% & 2.50 & 1.00 & 95.3\%  & 95.3\%  \\
& T-Rex:NN (Ours) & 10.13 &1.00& 100\%   & 100\% & 3.04 & 1.00 & 100\%   & 100\%  \\
\midrule
\multirow{5}{*}{\rotatebox[origin=c]{90}{{CTG}}}
& min Cost & 0.21 &0.94& 74.6\%  & 70.2\% & 0.08 & 1.00 & 19.7\%  & 14.1\%  \\
& min Cost+T-Rex:I (Ours) & 1.11 &0.83& 100\%   & 98.8\% & 0.42 & 0.94 & 100\%   & 99.7\%  \\
& min Cost+SNS & 3.34 &-1.00& 100\%   & 98.2\% & 1.07 & -1.00 & 100\%   & 99.3\%  \\ 
& ROAR & 3.68 &0.64& 98.7\% & 96.4\% & 1.35 & 0.59& 98.9\% & 97.2\%   \\ 
\cmidrule(lr){2-10}
& NN & 0.39 &1.00& 70.5\%  & 67.5\% & 0.15 & 1.00 & 70.5\%  & 67.5\%  \\
& T-Rex:NN (Ours) & 2.22 &-0.33& 100\%   & 100\% & 1.00 & -0.33 & 100\%   & 100\%  \\
\bottomrule
\end{tabular}
\end{small}
\end{table*}

\textbf{Performance Metrics:}
Our metrics of interest are:
\begin{itemize}[leftmargin=*, topsep=0pt, itemsep=0pt]
    \item {Cost:} Average $l_1$ or $l_2$ distance between counterfactuals $x'$ and original points $x$. 

    \item {Validity (\%):} Percentage of counterfactuals that remain valid under the new model $M$.

    \item {LOF: } Predicts $-1$ for anomalous points, and $+1$ for inliers. A high average LOF essentially suggests the points lie on the data manifold and hence more realistic, i.e., \emph{higher is better} (see Definition ~\ref{defn:lof}). We use an existing implementation from ~\citet{scikit-lof} to compute the LOF.
\end{itemize}

\textbf{Methodology:} We begin by training a baseline neural network model and aim to find counterfactuals for data points with true negative predictions. To test the robustness of these counterfactual examples, we then train $50$ new models ($M$) and evaluate the $validity$ of the counterfactuals under different model change scenarios, which include: (i) Weight Initialization (WI): Retraining new models using the same hyperparameters but with different weight initialization by using different random seeds for each new model; and (ii) Leave Out (LO): Retraining new models by randomly removing a small portion ($1\%$) of the training data each time (with replacement) as well as different weight initialization.

\textbf{Hyperparameter selection:}
Our theoretical findings indicate that higher $k$ improves robustness, but comes at the cost of increased computational cost. We determine $k=1000$ was sufficient.  The value of $\sigma^2$ was determined by analyzing the variance of the features. In the dataset with the features between $[0,1]$, we found that a value of $\sigma^2=0.01$ produced good results.
The threshold $\tau$ is a critical aspect of our method and can be adjusted based on the desired effective validity. A higher $\tau$ value improves validity at the expense of $l_1$ or $\l_2$ cost. See Appendix \ref{apxEX} for more details.

\textbf{Baseline:} We compare our approaches with established baselines. First, we find the min Cost ($l_1$ and $l_2$) counterfactual \cite{wachter2017counterfactual} and use it as our base method for generating counterfactuals. We then compare T-Rex:I to the Stable Neighbor Search (SNS)~\cite{black2021consistent} and Robust Algorithmic
Recourse (ROAR)~\cite{upadhyay2021towards}. We evaluate the performance of our Robust Nearest Neighbor (Algorithm \ref{alg:LOF}:T-Rex:NN) against the Nearest Neighbor (NN) counterfactuals (closest data-support robust counterfactual in Definition~\ref{defn:DSRC}). We choose a value of $\tau$ to get high validity and compare cost and LOF with baselines.

\textbf{Results:} Results for HELOC, German Credit, and CTG datasets are summarized in Table \ref{exphel}. 
Observe that the min Cost counterfactual is not robust to variations in the training data or weight initialization as expected. ROAR generates counterfactuals with high validity, albeit at the expense of a higher cost. Our proposed method, T-Rex:I, significantly improves the validity of the counterfactuals compared to the minimum cost. The T-Rex:I algorithm achieves comparable validity results to the SNS method for both types of model changes, and often accomplishes this with lower costs and higher LOF. This can be observed across all three datasets for both $l_1$ and $l_2$ cost metrics. The T-Rex:NN algorithm also significantly improves the validity of the counterfactuals compared to the traditional Nearest Neighbor (NN) method and maintains a high LOF. It comes at a price of increased cost, but the counterfactuals are guaranteed to be realistic since they are data supported. Refer to Appendix \ref{apxEX} for additional results for Adult and Taiwanese credit datasets.

\textbf{Ablation:} To evaluate the efficacy of our proposed stability measure, we conduct an ablation study on the German credit dataset. We first evaluate a robustness measure that solely relies on the model's prediction of the counterfactual, denoted as $r(x',m)=m(x')$. We then examine a measure that only incorporates the mean, the average predictions for $k$ points sampled from the distribution $N(x',\sigma^2I_d)$, denoted as $r_{k,\sigma^2}(x',m)=\frac{1}{k}\sum_{x'_i \in N_{x',k}} m(x'_i)$. We compare these with our proposed robustness measure $\hat{R}_{k,\sigma^2}(x',m)$, which also takes into account the variability around the counterfactual. The results of the ablation study, for various $\tau$ thresholds, are summarized in Table ~\ref{tbl:ablation} in Appendix~\ref{apxEX}.

\section{Discussion}
We introduce an abstraction called naturally-occurring model change and propose a measure, Stability, to quantify the robustness of counterfactuals with probabilistic guarantees. We show that counterfactuals with high Stability will remain valid after potential model changes with high probability. We investigate various techniques for incorporating stability in generating robust counterfactuals and introduce the T-Rex:I and T-Rex:NN algorithms. We also make a novel conceptual connection with the body of work on model multiplicity, further emphasizing on the models that are more likely to occur.

\textbf{Limitations and Broader Impact:} The naturally-occurring model changes rest on assumptions that may not apply to all models or datasets. Our relaxed stability, although practically implementable, lacks the same theoretical guarantees as the initial stability measure. Estimating the Lipschitz constant around a counterfactual can be computationally demanding, particularly when leveraging gradient descent to optimize stability. Though generating robust counterfactuals is a key step towards trustworthy AI, it can fall short of other important factors such as fairness~\cite{sharma2019certifai,gupta2019equalizing,ley2022global,raman2023bayesian,ehyaei2023robustness}. Future research could explore links between robustness and fairness, improving the estimation of stability, or integrating Stability into training-time-based approaches for generating robust counterfactuals.

\paragraph{Disclaimer}
This paper was prepared for informational purposes in part by
the Artificial Intelligence Research group of JPMorgan Chase \& Co. and its affiliates (``JP Morgan''),
and is not a product of the Research Department of JP Morgan.
JP Morgan makes no representation and warranty whatsoever and disclaims all liability,
for the completeness, accuracy, or reliability of the information contained herein.
This document is not intended as investment research or investment advice, or a recommendation,
offer or solicitation for the purchase or sale of any security, financial instrument, financial product or service,
or to be used in any way for evaluating the merits of participating in any transaction,
and shall not constitute a solicitation under any jurisdiction or to any person,
if such solicitation under such jurisdiction or to such person would be unlawful.

\newpage
\bibliography{example_paper}
\bibliographystyle{icml2023}


\appendix
\onecolumn

\section{Relevant Inequalities}

\begin{lem}[Cauchy-Schwarz Inequality]\label{caucy}
If $\mathbf{u}, \mathbf{v} \in V$, where $V$ is a vector space, then
$$
|\langle\mathbf{u}, \mathbf{v}\rangle| \leq\|\mathbf{u}\|\|\mathbf{v}\| .
$$
This inequality is an equality if and only if one of $\mathbf{u}, \mathbf{v}$ is a scalar multiple of the other.

\end{lem}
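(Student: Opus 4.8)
The plan is to reduce to the case $\mathbf{v}\neq\mathbf{0}$ first, since if $\mathbf{v}=\mathbf{0}$ both sides vanish and $\mathbf{v}=0\cdot\mathbf{u}$ is a scalar multiple of $\mathbf{u}$, so the statement holds trivially. Assuming $\mathbf{v}\neq\mathbf{0}$, I would form the orthogonal projection of $\mathbf{u}$ onto the line spanned by $\mathbf{v}$: set $\lambda=\langle\mathbf{u},\mathbf{v}\rangle/\langle\mathbf{v},\mathbf{v}\rangle$ and $\mathbf{w}=\mathbf{u}-\lambda\mathbf{v}$. A one-line computation gives $\langle\mathbf{w},\mathbf{v}\rangle=\langle\mathbf{u},\mathbf{v}\rangle-\lambda\|\mathbf{v}\|^2=0$, so by the Pythagorean identity $\|\mathbf{u}\|^2=\|\mathbf{w}+\lambda\mathbf{v}\|^2=\|\mathbf{w}\|^2+|\lambda|^2\|\mathbf{v}\|^2=\|\mathbf{w}\|^2+|\langle\mathbf{u},\mathbf{v}\rangle|^2/\|\mathbf{v}\|^2$. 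Dropping the nonnegative term $\|\mathbf{w}\|^2$ yields $|\langle\mathbf{u},\mathbf{v}\rangle|^2\leq\|\mathbf{u}\|^2\|\mathbf{v}\|^2$, and taking square roots gives the stated inequality.

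An alternative I would mention (and which is perhaps cleaner to write) is to consider the nonnegative real-valued map $t\mapsto\|\mathbf{u}-t\mathbf{v}\|^2=\|\mathbf{u}\|^2-2t\,\mathrm{Re}\langle\mathbf{u},\mathbf{v}\rangle+t^2\|\mathbf{v}\|^2\geq0$ for all $t\in\mathbb{R}$; as a nonnegative quadratic in $t$ with positive leading coefficient its discriminant must be $\leq0$, i.e. $(\mathrm{Re}\langle\mathbf{u},\mathbf{v}\rangle)^2\leq\|\mathbf{u}\|^2\|\mathbf{v}\|^2$. In the real setting relevant to this paper (vectors in $\mathbb{R}^d$ with the Euclidean inner product) this already is the claim; in the complex case one rotates $\mathbf{u}\mapsto e^{i\theta}\mathbf{u}$ with $\theta$ chosen so that $e^{i\theta}\langle\mathbf{u},\mathbf{v}\rangle\geq0$ is real, which leaves the norms unchanged and promotes $\mathrm{Re}\langle\mathbf{u},\mathbf{v}\rangle$ to $|\langle\mathbf{u},\mathbf{v}\rangle|$.

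For the equality characterization I would trace back through the projection argument: equality in $|\langle\mathbf{u},\mathbf{v}\rangle|^2\leq\|\mathbf{u}\|^2\|\mathbf{v}\|^2$ forces $\|\mathbf{w}\|^2=0$, hence $\mathbf{w}=\mathbf{0}$ by positive-definiteness of the inner product, i.e. $\mathbf{u}=\lambda\mathbf{v}$. Conversely, if $\mathbf{u}=\lambda\mathbf{v}$ for some scalar $\lambda$ (or symmetrically $\mathbf{v}=\mu\mathbf{u}$, or either vector is $\mathbf{0}$), substituting directly shows both sides equal $|\lambda|\|\mathbf{v}\|^2$, so equality holds.

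The argument is entirely routine, so I do not anticipate a genuine obstacle; the only points requiring care are the complex case (handled by the phase-rotation trick above) and the fact that the hypothesis ``$V$ is a vector space'' should be read as ``$V$ is an inner product space,'' so the write-up should state at the outset that $\langle\cdot,\cdot\rangle$ is a positive-definite inner product and $\|\cdot\|$ its induced norm.
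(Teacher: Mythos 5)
Your proof is correct: the projection/Pythagorean argument, the quadratic-discriminant alternative, and the equality characterization are all standard and sound, and your remark that the hypothesis should really read ``inner product space'' rather than merely ``vector space'' is a fair correction to the statement as printed. Note, however, that the paper gives no proof of this lemma at all --- it is stated in the appendix as a standard auxiliary fact (used only to justify step (a) in the proof of Lemma~\ref{lem:natural} and step (c) in the Lipschitz bound for Lemma~\ref{lem:concentration_modified}) --- so there is no authorial argument to compare against; your write-up would simply supply the textbook proof the authors chose to omit.
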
 

\begin{lem}[Jensens Inequality]\label{jens}
Let $X$ be an integrable random variable. Let $g: \mathbb{R} \rightarrow \mathbb{R}$ be a convex function such that $Y=g(X)$ is also integrable. Then, the following inequality, called Jensen's inequality, holds:
$$
\mathrm{E}[g(X)] \geq g(\mathrm{E}[X]).
$$
\end{lem}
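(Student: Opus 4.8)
The plan is to prove Jensen's inequality via the standard \emph{supporting line} (subgradient) argument, which reduces the inequality to the linearity and monotonicity of expectation. Write $\mu = \mathbb{E}[X]$, which is well-defined and finite since $X$ is integrable.

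First I would establish the key geometric fact about convex functions: at the point $\mu$, the function $g$ admits a supporting line. Concretely, because $g$ is convex on $\mathbb{R}$, its one-sided derivatives $g'_-(\mu)$ and $g'_+(\mu)$ exist and are finite, and any slope $a$ with $g'_-(\mu) \le a \le g'_+(\mu)$ is a subgradient at $\mu$, meaning
$$g(t) \ge g(\mu) + a\,(t - \mu) \quad \text{for all } t \in \mathbb{R}.$$
This inequality is the heart of the argument; I would justify it directly from the definition of convexity by using the monotonicity of the difference quotients $\frac{g(t)-g(\mu)}{t-\mu}$ as $t$ ranges on each side of $\mu$, which is exactly what forces the left and right derivatives to exist and to satisfy $g'_-(\mu) \le g'_+(\mu)$.

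Next I would substitute the random variable $X$ for $t$ to obtain the pointwise (almost-sure) inequality $g(X) \ge g(\mu) + a\,(X - \mu)$. Taking expectations of both sides, using that $g(X)=Y$ is integrable (so the left side has a finite expectation) together with the linearity of expectation applied to the affine right side, I get
$$\mathbb{E}[g(X)] \ge g(\mu) + a\,\big(\mathbb{E}[X] - \mu\big) = g(\mu),$$
since $\mathbb{E}[X] - \mu = 0$ by the definition of $\mu$. Recalling $\mu = \mathbb{E}[X]$ then yields $\mathbb{E}[g(X)] \ge g(\mathbb{E}[X])$, as desired.

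The main obstacle is the first step, namely rigorously producing the subgradient slope $a$ and the supporting-line inequality at the (arbitrary) point $\mu$; once this is in hand, the remainder is just taking expectations and invoking $\mathbb{E}[X]=\mu$. I would dispatch this obstacle by appealing to the standard fact that convexity makes the difference quotients monotone, so the one-sided derivatives exist and bracket a valid slope, and then checking that any $a$ in the interval $[g'_-(\mu), g'_+(\mu)]$ produces a line lying weakly below $g$ everywhere on $\mathbb{R}$.
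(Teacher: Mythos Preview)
Your argument is the standard supporting-line (subgradient) proof of Jensen's inequality and is correct as written. Note, however, that the paper does not actually prove this lemma: it is merely listed in the appendix as a relevant classical inequality and then invoked (in its concave form, for $f(u)=\sqrt{u}$) in the proof of Lemma~\ref{lem:natural}. So there is no paper proof to compare against; your proposal simply supplies the omitted textbook justification.
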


\section{Proof of Lemma~\ref{lem:natural}} \label{apx:bound}
\natocc*
\begin{proof}

\begin{align}
\E{\frac{1}{n}\sum_{i=1}^n{|m(x_i)-M(x_i)|}} & \overset{(a)}{\leq} \E{\sqrt{\sum_{i=1}^n \frac{1}{n^2}} \sqrt{\sum_{i=1}^n |m(x_i)-M(x_i)|^2 }\ \ } \\
& = \frac{1}{\sqrt{n}}\ \E{\sqrt{\sum_{i=1}^n |m(x_i)-M(x_i)|^2} \ \ } \\
& \overset{(b)}{\leq} \frac{1}{\sqrt{n}}\ \sqrt{\E{\sum_{i=1}^n |m(x_i)-M(x_i)|^2} \ \ } \\
& = \frac{1}{\sqrt{n}}\ \sqrt{\sum_{i=0}^n\nu_{x_i}} \\
& \overset{(c)}{\leq } \frac{1}{\sqrt{n}}\ \sqrt{\sum_{i=0}^n\nu} = \sqrt{\nu}.
 \end{align}
Here (a) holds from Cauchy-Schwarz Inequality (Lemma~\ref{caucy}) applied on the dot product of the two vectors $[1/n,1/n,\ldots,1/n]$ and $[|m(x_1)-M(x_1)|, |m(x_2)-M(x_2)|, \ldots, |m(x_n)-M(x_n)|]$. Next, (b) holds from Jensen's Inequality (Lemma~\ref{jens}) applied on concave function $f(u)=\sqrt u $. Finally, (c) holds because the points $x_1,x_2,\ldots,x_n$ lie on the data-manifold and hence the variance $\nu_{x_i} \leq \nu$ from Definition~\ref{defn:natural}.

\end{proof}

\section{Proof of Probabilistic Guarantee}\label{apxproof}
To prove Theorem \ref{thm:guarantee}, we begin with Lemma \ref{lem:concentration_modified}.

Assume the changed model \(M\) comes from a discrete class of random variables. A possible realization of a model is denoted by \(\tilde{m}_i\), with \(i = 1, 2, \ldots, n\). The set of all possible models is denoted by \(\mathcal{M} = \{ \tilde{m}_1, \tilde{m}_2, \ldots, \tilde{m}_n \}\). Let $M=\Tilde{m}_i$ with probability $p_i$ such that $\sum_{i=1}^n p_i =1$.
\bound*

\begin{proof}
To prove Lemma~\ref{lem:concentration_modified}, notice that, 
\begin{align}
\E{Z}&=\frac{1}{k}\sum_{i=1}^k\Esub{X_i}{\Esub{M|X_i}{m(X_i)-M(X_i)}} \nonumber \\
& \overset{(a)}{=} \frac{1}{k}\sum_{i=1}^k\Esub{X_i}{m(X_i)-m(X_i)} = 0,
\end{align}
where (a) holds from the naturally occurring model change assumption (see Definition~\ref{prop1}). The remaining part of the proof leverages concentration bounds for Lipschitz functions of independent Gaussian random variables outlined in Lemma~\ref{lem:concentration_actual}. 
\gaus*

Now, let $X_{ij}$ denote the $j$-th element of $X_i \in \mathbb{R}^d$ and $x_j$ denote the $j$-th element of $x \in \mathbb{R}^d$. We define a $k \times d$ matrix $W=[W_{ij}]_{i=1,2,\ldots,k, \text{ and } j=1,2,\ldots,d}$ with $W_{ij}= X_{ij}-x_j$. Notice that, $W_{ij} \sim \mathcal{N}(0,\sigma^2)$ for $i=1,2,\ldots,k$ and $j=1,2,\ldots,d$ and, we can write $Z=f(W)$ with Lipschitz constant $(\gamma_m + \gamma)/\sqrt{k}$. 
\begin{align}
&|f(W) - f(W')| \nonumber \\
& =\left| \frac{1}{k}\sum_{i=1}^k (m(X_i)-M(X_i) - m(X_i') + M(X_i')) \right| \nonumber\\
& \overset{(a)}{\leq} \frac{1}{k}\sum_{i=1}^k (|m(X_i)-m(X_i')| + |M(X_i)-M(X_i')|) \nonumber\\
& \overset{(b)}{\leq} \frac{1}{k}\sum_{i=1}^k (\gamma_m + \gamma)\|X_i -X_i'\|_2 \nonumber\\
& \overset{(c)}{\leq} \frac{(\gamma_m + \gamma)\sqrt{k}}{k} \sqrt{\sum_{i=1}^k \|X_i -X_i'\|_2^2} \nonumber \\
& = \frac{(\gamma_m + \gamma)}{\sqrt{k}} \sqrt{\sum_{i=1}^k \sum_{j=1}^d |W_{ij}-W'_{ij}|^2} \nonumber \\
& = \frac{(\gamma_m + \gamma)}{\sqrt{k}} \|W-W'\|_2.
\end{align}
\end{proof}
Here, (a) holds from the triangle inequality. (b) follows directly from the definition of $\gamma$-Lipschitz (Definition \ref{def:Lipschitz}). (c) can be obtained by using the Cauchy-Schwarz Inequality (Lemma \ref{caucy}).

Now, we substitute these expressions in the Gaussian concentration bound (Lemma~\ref{lem:concentration_actual}). 

\begin{align}
\Pr(Z - \E{Z| M= \til{m}} \geq \til{\epsilon}| M = \til{m}) \leq \exp{\left(\frac{-k\til{\epsilon}^2}{2(\gamma + \gamma_m)^2\sigma^2}\right)}.
\end{align}

Since $|\E{Z|M}- \E{Z}| < \epsilon'$ and $\E{Z}=0$, we have  $ - \epsilon' < \E{Z|M= \til{m}} <  \epsilon'$  $\forall \til{m} \in \mathcal{M}$. Now observe that,

\begin{align}\label{eqn: Z_cond}
\Pr(Z \geq \epsilon' + \til{\epsilon}| M = \til{m}) &{\stackrel{\text{(a)}}{\leq}} \Pr(Z \geq \E{Z| M= \til{m}} + \til{\epsilon}| M = \til{m}) \\
& \leq \exp{\left(\frac{-k\til{\epsilon}^2}{2(\gamma + \gamma_m)^2\sigma^2}\right)}.
\end{align}

Here, (a) holds since $\E{Z|M= \til{m}} <  \epsilon'$. The event on the left is a subset of that on the right. Therefore, the probability of the event \{$Z \geq \epsilon' + \til{\epsilon}$\}   occurring cannot be more than the probability of the event \{$Z \geq \E{Z| M= \til{m}} + \til{\epsilon}$\}  occurring.
\begin{align}
    \Pr(Z \geq \epsilon' + \Tilde{\epsilon})
    &{\stackrel{\text{(b)}}{=}} \sum_{i=1}^n \Pr(Z \geq  \epsilon'+\Tilde{\epsilon} | M = \Tilde{m}_i) \Pr(M = \til{m_i})\\
    & {\stackrel{\text{(c)}}{\leq}} \exp \left(\frac{-k \til{\epsilon}^2}{2(\gamma_m+\gamma)^2 \sigma^2}\right) \sum_{i=1}^n \Pr(M = \til{m_i})\\
    & = \exp \left(\frac{-k \til{\epsilon}^2}{2(\gamma_m+\gamma)^2 \sigma^2}\right)\\
    &{\stackrel{\text{(d)}}{\leq}} \exp \left(\frac{-k (\Tilde{\epsilon}+ \epsilon')^2}{8(\gamma_m+\gamma)^2 \sigma^2}\right)
\end{align}
Here, (b) holds from the law of total probability. Next, (c) follows from bound in \eqref{eqn: Z_cond}. Finally, (d) holds from the inequality $4 \til{\epsilon}^2 > (\til{\epsilon} + \epsilon')^2$ which holds for $\til{\epsilon} > \epsilon' > 0 $. Setting $\epsilon = \til{\epsilon} + \epsilon'$ completes the proof of Lemma \ref{lem:concentration_modified}

\subsection{Proof of Theorem \ref{thm:guarantee}}\label{app:prof_guar}
\Guarantee*
\begin{proof}
The Lipschitz property of $M(\cdot)$ around $x$ is given by,
$$|M(x)-M(x')|\leq \gamma\|x-x'\| \text{ for all } x,x' \in \mathbb{R}^d$$
Therefore,
\begin{equation}
M(x)  \geq M(X_i)-\gamma\|x-X_i\|
\label{eq:lip_ineq}
\end{equation}
\begin{equation}
M(x) \overset{(a)}{\geq} \frac{1}{k}\sum_{i=1}^k (M(X_i)-\gamma\|x-X_i\|)  
\label{eqn:apxThm}
\end{equation}
where (a) holds from taking the average of the inequality \eqref{eq:lip_ineq} over all $i$ from $1$ to $k$.\\
From Lemma \ref{lem:concentration_modified}, for $X_1,X_2,\ldots,X_k \sim \mathcal{N}(x,\sigma^2I_d)$, 
\begin{equation}
\frac{1}{k}\sum_{i=1}^k M(X_i) \geq \frac{1}{k}\sum_{i=1}^k m(X_i) - \epsilon   
\label{eqn:conc}
\end{equation}
with probability at least $1-\exp{\left(-\frac{k\epsilon^2}{8(\gamma + \gamma_m)^2\sigma^2}\right)}.$

Hence, plugging \eqref{eqn:conc} into \eqref{eqn:apxThm}, we have:
$$\Pr \big(M(x)\geq \frac{1}{k}\sum_{i=1}^k (m(X_i)-\gamma\|x-X_i\|)-\epsilon \big) \geq 1-\exp{\left(\frac{-k\epsilon^2}{8(\gamma+\gamma_m)^2\sigma^2}\right)}.$$

Recall from Definition \ref{def:stability}, stability $R_{k,\sigma^2}(x,m)=\frac{1}{k}\sum_{i=1}^k \left( m(X_i)  - \gamma \|x-X_i\|\right)$. Hence, we have:
$$\Pr(M(x)\geq R_{k,\sigma^2}(x,m) - \epsilon)
\geq 1 - \exp{\left(\frac{-k\epsilon^2}{8(\gamma+\gamma_m)^2\sigma^2}\right)}.$$
\end{proof}

\section{Appendix to Experiments in Section~\ref{sec:experiments}}\label{apxEX}
\subsection{Datasets}
\textbf{HELOC.} The FICO HELOC ~\cite{fico2018a} dataset contains anonymized information about home equity line of credit applications made by homeowners in the US, with a binary response indicating whether or not the applicant has ever been more than 90 days delinquent for a payment. It can be used to train a machine learning model to predict whether the homeowner qualifies for a line of credit or not. The dataset consists of 10459 rows and 40 features, which we have normalized to be between zero and one.

\textbf{German Credit.} The German Credit Dataset~\cite{Dua:2019} comprises 1000 entries, each representing an individual who has taken credit from a bank. These entries are characterized by 20 categorical features, which are used to classify each person as a good or bad credit risk. To prepare the dataset for analysis, we one-hot encoded the data and normalized it such that all features fall between 0 and 1. Additionally, we partitioned the dataset into a training set and a test set, with a 70:30 ratio respectively.

\textbf{CTG.} The CTG dataset~\cite{Dua:2019} consists of 2126 fetal cardiotocograms, which have been evaluated and categorized by experienced obstetricians into three categories: healthy, suspect, and pathological. We process this dataset based on \citet{black2021consistent}. The problem was transformed into a binary classification task, where healthy fetuses are distinguished from the other two categories. We divided the dataset into a training set of 1,700 instances and a validation set of 425 instances. Each instance is described by 21 features, which we normalized to have values between zero and one.

\textbf{Adult.} The Adult dataset ~\cite{Dua:2019} is a publicly available dataset in the UCI repository based on $1994$ U.S. census data. This dataset is a classification task to successfully predict whether an individual earns more or less than $50,000$ per year based on features such as occupation, marital status, and education, etc. The dataset consists of approximately 48,842 instances, split into a training set of 32,561 instances and a test set of 16,281 instances. 

\textbf{Taiwanese Credit.} The Taiwanese dataset~\cite{YEH20092473} consists of 30,000 instances with 24 features that include individuals' financial data, with a binary response indicating their creditworthiness. We use one-hot encoding on the data and normalize it to be between zero and one. This dataset is processed based on ~\citet{black2021consistent}. We partition the data into a training set of 22,500 and a test set of 7,500.

\subsection{Model Architecture } We first trained a base neural network model for which we aim to find counterfactuals for instances with false negative predictions. The architecture of our base model consisted of two hidden layers, each containing 128 hidden units. We employed the rectified linear unit (ReLU) activation function and Adam optimizer. The model was trained for 50 epochs using a batch size of 32. We employed this model architecture and training setup on all datasets since it yielded a satisfactory level of accuracy on all of them. To evaluate the robustness of these counterfactual examples, we then trained 50 new models and assessed the validity of the counterfactuals under various model change scenarios. All $50$ models had the same architecture and training setup as the base model except for some slight changes which include: Weight Initialization (WI): Retraining new models with the same hyperparameters but different weight initialization by using different random seeds for each model. Leave Out (LO): Retraining new models by randomly removing a small portion ($1\%$) of the training data each time and using different weight initialization.

\subsection{Implementation Details}
The stability measure has the number of samples $k$ and the variance $\sigma^2$ as hyperparameters. Our theoretical findings suggest that a higher value of $k$ improves the robustness of the counterfactuals but at the expense of increased cost and computational complexity. In our experiments, we found that a value of $k=1000$ was sufficient. The value of $\sigma^2$ was determined by analyzing the variance of the features in the dataset, and we found that a value of $\sigma^2=0.01$ produced good results for the features that lie between $[0,1]$. These hyperparameters were kept constant across all our experiments and datasets. The baseline technique for generating counterfactuals used in T-rex was the min Cost counterfactual \cite{wachter2017counterfactual}. For other algorithm parameters, a step size of $0.01$ was fixed for all datasets and experiments. The maximum number of iterations (max$\_$steps) varied depending on the dataset, with 50 for HELOC, 200 for German Credit, and 100 for CTG, Adult, and Taiwanese. An appropriate value of $\tau$ balances the trade-off between validity and cost. We choose a value of $\tau$ to guarantee high validity and compare cost and LOF with the baselines. Another method of choosing $\tau$ is to use the histogram of $R(\cdot)$
 on the training dataset (e.g., see Fig. \ref{fig:hist} for HELOC dataset). To implement ROAR and SNS, we adhered to techniques and algorithm parameters discussed in the original works \cite{upadhyay2021towards,black2021consistent}. In NN and T-Rex:NN implementation, a crucial consideration is determining the appropriate number of neighbors $K$, to search for a robust counterfactual. For larger datasets such as HELOC, Adult, and Taiwanese datasets $K=1000$, while for the German credit and CTG datasets $K=100$. Note that if $K$ is too small, a counterfactual robustness test might not be satisfied, and hence T-Rex:NN returns no robust counterfactuals.

\begin{figure}[t]
\centering
    \begin{subfigure}[b]{0.3\textwidth}
        \includegraphics[width=0.9\textwidth]{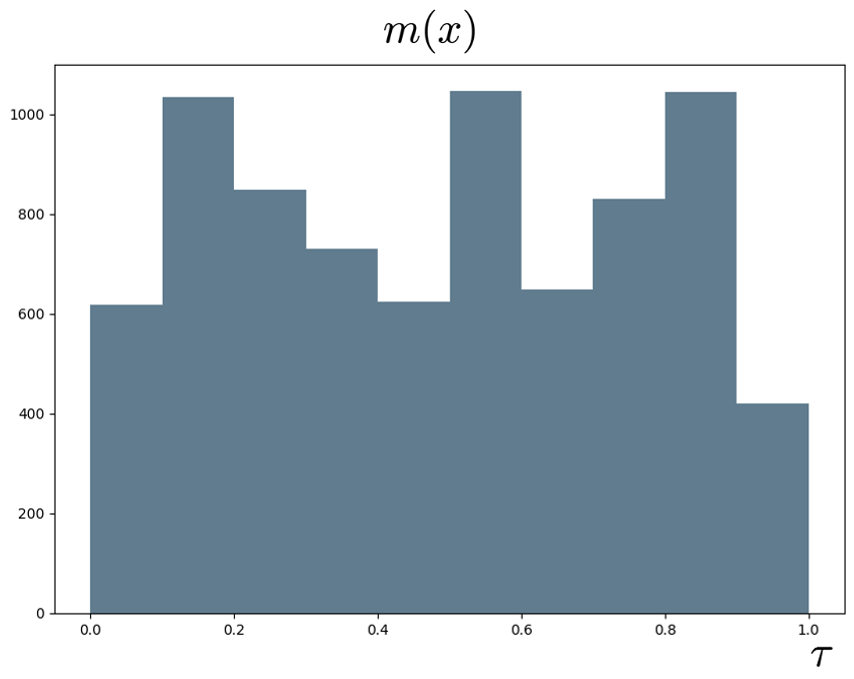}
    \end{subfigure}
    \hfill
    \begin{subfigure}[b]{0.3\textwidth}
        \includegraphics[width=0.9\textwidth]{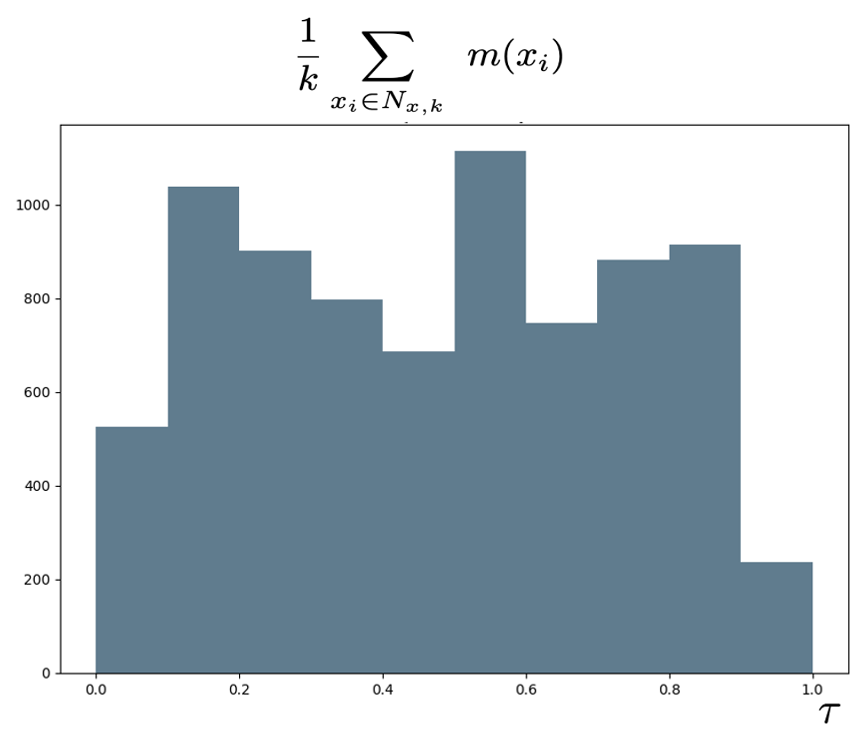}
    \end{subfigure}
    \hfill
    \begin{subfigure}[b]{0.3\textwidth}
        \includegraphics[width=0.9\textwidth]{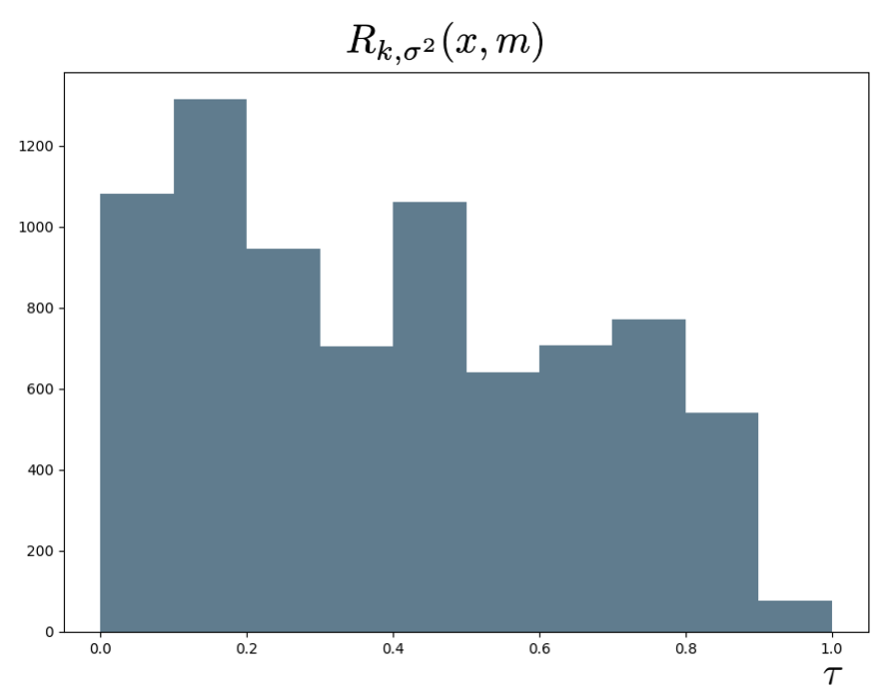}
    \end{subfigure}
    \caption{ Histograms on the HELOC dataset to visualize the proposed stability measure.}
    \label{fig:hist}
\end{figure}
\subsection{Additional Experimental Results}

\begin{table}[H]
\caption{Experimental results for HELOC dataset with standard deviations}
\centering
\begin{small}
\begin{tabular}{llccccccc} 
\toprule
\multirow{2}{*}{{HELOC}}    & \multicolumn{4}{c}{\textit{$l_1$ based}} & \multicolumn{4}{c}{\textit{$l_2$ based}}  \\ 
\cmidrule(lr){2-5} \cmidrule(lr){6-9}
   & COST & LOF  & WI VAL. & LO VAL. & COST & LOF  & WI VAL. & LO VAL.   \\ 
\midrule
min Cost & $0.40 \pms 0.48$ & $0.49 \pms 0.93$ & $38.8 \pms 2.5\% $& $35.2 \pms 2.5\%$  & $0.11 \pms 0.08$ & $0.75 \pms 0.61$ & $13.5\pms 3.4\%$  & $13.5\pms 3.5\%$\\
+T-Rex:I & $1.02 \pms 0.41$ & $0.38 \pms 0.95$ & $98.2\pms 1.2\%$  & $98.1\pms 1.2\%$ & $0.29\pms 0.07$ & $0.68\pms 0.64$ & $98.5 \pms 1.9\%$  & $98.2 \pms 2.0\%$  \\
+SNS & $1.20\pms0.46$ & $0.30\pms0.46$ & $98.0\pms 1.2\% $ & $97.8\pms 1.1\%$  & $0.31\pms0.08$ & $0.64\pms 0.78$ & $97.9\pms 0.8\%$  & $97.0\pms 0.8\%$ \\
ROAR & $1.69 \pms 1.59 $ & $0.41 \pms 0.73$ & $92.6 \pms 3.9\%$ & $91.2\pms 4.3\%$ & $1.91\pms 2.22$ & $0.43\pms 0.83$& $86.3\pms 0.2\%$ &  $84.8\pms 0.3\%$ \\ 
\cmidrule(lr){1-5} \cmidrule(lr){6-9}
NN & $1.91\pms 2.27$ & $0.80\pms2.27$ & $51.1\pms 12.6\%$  & $50.3\pms 12.4\%$ & $0.56\pms 0.59$ & $0.80\pms 2.27$ & $51.1\pms 12.6\%$  & $50.3\pms 12.4\%$   \\
T-Rex:NN & $2.50\pms 1.83$ & $0.92\pms0.56$ & $84.0\pms 0.8\%$  & $84.0\pms 0.8\%$ & $0.77\pms 0.61$ & $0.92\pms0.56$ & $84.0\pms 0.8\%$  & $84.0\pms 0.8\% $\\
\bottomrule
\end{tabular}
\end{small}
\end{table}

\begin{table}[H]
\caption{Experimental results for German Credit dataset with standard deviations.}
\label{expgerm}
\centering
\begin{small}
\begin{tabular}{llccccccc} 
\toprule
\multirow{2}{*}{{GERMAN}}    & \multicolumn{4}{c}{\textit{$l_1$ based}} & \multicolumn{4}{c}{\textit{$l_2$ based}}  \\ 
\cmidrule(lr){2-5} \cmidrule(lr){6-9}
   & COST & LOF  & WI VAL. & LO VAL. & COST & LOF  & WI VAL. & LO VAL.   \\ 
\midrule
min Cost & $1.42 \pms 4.16$ & $0.77 \pms 0.65$ & $58.8\pms 7.9\% $  & $56.7 \pms 7.3\% $ & $0.48 \pms 1.38$ & $0.81 \pms 0.71$ & $26.6\pms 15\% $  & $26.6\pms 14\%$  \\
+T-Rex:I     & $4.81 \pms 3.93$ & $0.72 \pms0.73$ & $98.0 \pms 4.6 \%$  & $96.5\pms 3.8\%$ & $1.20 \pms 1.37$ & $0.75 \pms 0.71$ & $99.2 \pms 0.9\%$  & $98.7\pms 0.9\%$  \\
+SNS     & $5.71 \pms4.10$ & $0.67 \pms0.39$ & $97.5 \pms 0.3\%$  & $98.1\pms 0.2\%$ & $1.44 \pms 1.38$ & $0.68 \pms 0.73$ & $99.9\pms 0.0 \%$  & $98.9\pms 0.1\%$  \\ 
ROAR     & $7.63 \pms 4.00$ & $0.54 \pms 0.67$ & $96.3 \pms 0.2\%$ & $92.3 \pms 0.3\%$ & $6.81 \pms 1.00$  & $0.58 \pms0.98$ & $87.8 \pms 1.4\%$ & $85.2\pms 1.6\%$   \\ 
\cmidrule(lr){1-5} \cmidrule(lr){6-9}
NN       & $7.05 \pms 3.96$ & $1.00 \pms 0.00$ & $95.3 \pms 0.9\%$  & $95.4 \pms 1.0\%$ & $2.50 \pms 1.20$ & $1.00 \pms 0.00$ & $95.3\pms 0.9$  & $95.3\pms 1.0 \%$  \\
T-Rex:NN    & $10.13 \pms 4.10 $ & $1.00 \pms 0.00$ & $100 \pms 0.0 \%$   & $100 \pms 0.0 \%$ & $3.04 \pms 1.45$ & $1.00 \pms 0.00$ & $100 \pms 0.0 \%$   & $100 \pms 0.0 \%$  \\
\bottomrule
\end{tabular}
\end{small}
\end{table}

\begin{table}[H]
\caption{Experimental results for CTG dataset with standard deviations.}
\label{expctg}
\centering
\begin{small}
\begin{tabular}{llccccccc} 
\toprule
\multirow{2}{*}{{CTG}}    & \multicolumn{4}{c}{\textit{$l_1$ based}} & \multicolumn{4}{c}{\textit{$l_2$ based}}  \\ 
\cmidrule(lr){2-5} \cmidrule(lr){6-9}
   & COST & LOF  & WI VAL. & LO VAL. & COST & LOF  & WI VAL. & LO VAL.   \\ 
\midrule
min Cost & $0.21 \pms 0.18$ & $0.94 \pms 0.91$ & $74.6 \pms 0.1\% $  & $70.2 \pms 0.2\% $ & $0.08 \pms 0.04$ & $1.00 \pms 0.00 $ & $19.7\pms 30\%$  & $14.1\pms 31\% $  \\
+T-Rex:I     & $1.11 \pms 0.11$ & $0.83 \pms 0.91$ & $100 \pms 0.0\%$   & $98.8 \pms 0.1\%$ & $0.42 \pms 0.04$ & $0.94 \pms 0.63$ & $100\pms 0.0\%$   & $99.7\pms 0.1\%$  \\
+SNS     & $3.34 \pms 0.18$ & $-1.00 \pms 0.0$ & $100 \pms 0.0\% $   & $98.2 \pms 0.1\%$ & $1.07 \pms 0.04$ & $-1.00 \pms 0.0$ & $100 \pms 0.0\% $   & $99.3 \pms 0.1\%$  \\ 
ROAR     & $3.68 \pms 3.48$ & $0.64 \pms 0.78$ & $98.7 \pms 0.5\%$ & $96.4 \pms 0.3\%$ & $1.35 \pms 2.01$ & $0.59 \pms 0.90$ & $98.8 \pms 0.0\%$ & $97.2 \pms 0.0 \%$   \\ 
\cmidrule(lr){1-5} \cmidrule(lr){6-9}
NN       & $0.39 \pms 0.23$ & $1.00 \pms 0.00$ & $70.5\pms 0.2\% $  & $67.5 \pms 0.1\%$ & $0.15 \pms 0.07$ & $1.00 \pms 0.00$ & $70.5 \pms 0.2 \% $  & $67.5\pms 0.1 \%$  \\
T-Rex:NN      & $2.22 \pms 0.12 $ & $-0.33 \pms 0.67$ & $100 \pms 0.0\%$   & $100\pms 0.0\% $ & $1.00 \pms 0.80$ & $-0.33 \pms 0.67$ & $100 \pms 0.0\%$   & $100\pms 0.0 \%$  \\
\bottomrule
\end{tabular}
\end{small}
\end{table}

\begin{table}[H]
\caption{Experimental results for Taiwanese Credit dataset with standard deviations.}
\label{exptaiwan}
\centering
\begin{small}
\begin{tabular}{llccccccc} 
\toprule
\multirow{2}{*}{{\small TAIWANESE}}    & \multicolumn{4}{c}{\textit{$l_1$ based}} & \multicolumn{4}{c}{\textit{$l_2$ based}}  \\ 
\cmidrule(lr){2-5} \cmidrule(lr){6-9}
   & COST & LOF  & WI VAL. & LO VAL. & COST & LOF  & WI VAL. & LO VAL.   \\ 
\midrule
min Cost & $3.95\pms3.42$ & $-0.37\pms0.92$ & $38.4\pms 6.0\%$& $38.4\pms5.9\%$ & $2.84\pms1.16$ & $-0.68\pms0.72$ & $21.1\pms2.1\%$ & $20.0\pms2.2\%$  \\
+T-Rex:I    & $6.34 \pms 3.26$ & $0.48\pms0.67$ & $96.9\pms0.7\%$ & $96.2\pms0.7\%$ & $3.06\pms1.11$ & $0.40\pms0.31$ & $ 96.8\pms2.1\%$ & $96.4\pms1.9\%$ \\
+SNS     & $6.51\pms 3.37$ & $0.39\pms 0.39 $ & $97.2\pms1.3\%$ & $96.9\pms1.4\%$ & $3.10\pms 1.16$ & $0.43\pms0.62$ & $ 96.7\pms3.1\%$ & $96.1\pms 2.7\%$ \\
\cmidrule(lr){1-5} \cmidrule(lr){6-9}
NN       & $5.80\pms 3.82$ & $0.84\pms 0.76$ & $53.5\pms9.9\%$ & $51.6\pms8.9\%$ & $2.18\pms1.27$ & $0.84\pms 0.76$ & $53.5\pms9.9\%$ & $51.6\pms8.9\%$ \\
T-Rex:NN     & $6.89\pms4.51$ &$0.86\pms0.61$& $98.8\pms 0.9\%$ & $98.4\pms0.8\%$ & $3.54\pms1.48$ & $0.86\pms0.61$ & $98.8\pms 0.9\%$ & $98.4\pms0.8\%$ \\
\bottomrule
\end{tabular}
\end{small}
\end{table}

\begin{table}[H]
\caption{Experimental results for Adult dataset with standard deviations.}
\label{expAdult}
\centering
\begin{small}
\begin{tabular}{llccccccc} 
\toprule
\multirow{2}{*}{{ADULT}}    & \multicolumn{4}{c}{\textit{$l_1$ based}} & \multicolumn{4}{c}{\textit{$l_2$ based}}  \\ 
\cmidrule(lr){2-5} \cmidrule(lr){6-9}
   & COST & LOF  & WI VAL. & LO VAL. & COST & LOF  & WI VAL. & LO VAL.   \\ 
\midrule
min Cost & $0.12\pms0.54$ & $0.09\pms0.99$ & $80.8\pms5.0\%$  & $78.2\pms4.7\%$ & $0.18\pms0.65$ & $0.09\pms0.99$ & $50.0\pms8.0\%$  & $49.2\pms8.2\%$  \\
+T-Rex:I    & $0.51\pms0.53$ & $0.08\pms0.99$ & $98.4\pms1.6\%$   & $98.1\pms1.3\%$ & $0.24\pms0.64$ & $0.09\pms0.99$ & $98.2\pms0.0\%$   & $98.2\pms0.0\%$  \\
+SNS     & $0.92\pms0.58$ & $-0.2\pms0.21$ & $98.6\pms0.0\%$   & $97.9\pms0.1\%$ & $0.37\pms0.62$ & $-0.22\pms0.97$ & $97.9\pms0.0\%$   & $97.8\pms0.0\%$  \\ 
\cmidrule(lr){1-5} \cmidrule(lr){6-9}
NN       & $2.16\pms1.43$ & $0.91\pms0.03$ & $81.0\pms2.8\%$  & $81.0\pms2.7\%$ & $1.21\pms0.65$ & $0.91\pms0.03$ & $81.0\pms2.8\%$  & $81.0\pms2.7\%$  \\
T-Rex:NN     & $3.25\pms1.6$ & $0.85\pms0.02$ & $99.2\pms0.0\%$   & $99.0\pms0.0\%$ & $1.59\pms0.57$ & $0.85\pms0.02$ & $99.2\pms0.0\%$   & $99.0\pms0.0\%$  \\
\bottomrule
\end{tabular}
\end{small}
\end{table}

\begin{table}[H]
\centering
\caption{Ablation study on German Credit Dataset.}
\label{tbl:ablation}
\begin{small}
\begin{tabular}{lccccccc} 
\toprule
                     &  & \multicolumn{3}{c}{$l_1$ based}                                  & \multicolumn{3}{c}{$l_2$ based}                                  \\
\cmidrule(lr){3-5} \cmidrule(lr){6-8}
$\tau  $                & {Measure} & COST  & LOF   & WI VAL.(\%) & COST  & LOF   & WI VAL.(\%) \\
\midrule
\multirow{3}{*}{0.5} & $r(x,m)$  & 1.12  & 0.78  & 57.0 & 0.54  & 0.81  & 32.6 \\
                     & $r_{k,\sigma^2}(x,m)$   & 2.06  & 0.81  & 57.4 & 0.57  & 0.80  & 37.3 \\
                     & $R_{k,\sigma^2}(x,m)$      & 2.33  & 0.80  & 66.9 & 0.65  & 0.80  & 52.0 \\
\cmidrule(lr){1-2} \cmidrule(lr){3-5} \cmidrule(lr){6-8}
\multirow{3}{*}{0.6} & $r(x,m)$  & 1.48 & 0.77  & 61.7 & 0.56  & 0.80  & 38.9 \\
                     & $r_{k,\sigma^2}(x,m)$   & 2.11  & 0.77  & 62.5 & 0.58  & 0.79  & 43.0 \\
                     & $R_{k,\sigma^2}(x,m)$      & 2.40  & 0.76  & 72.9 & 0.68  & 0.81  & 61.7 \\
\cmidrule(lr){1-2} \cmidrule(lr){3-5} \cmidrule(lr){6-8}
\multirow{3}{*}{0.7} & $r(x,m)$  & 1.73  & 0.71  & 61.3 & 0.63  & 0.87  & 42.8 \\
                     & $r_{k,\sigma^2}(x,m)$   & 2.61  & 0.75  & 63.7 & 0.67  & 0.87  & 51.8 \\
                     & $R_{k,\sigma^2}(x,m)$      & 2.90  & 0.76  & 72.6 & 0.76  & 0.88  & 67.0 \\
\cmidrule(lr){1-2} \cmidrule(lr){3-5} \cmidrule(lr){6-8}
\multirow{3}{*}{0.8} & $r(x,m)$  & 1.69  & 0.76  & 74.4 & 0.68  & 0.85  & 61.0 \\
                     & $r_{k,\sigma^2}(x,m)$   & 2.50  & 0.79  & 79.0 & 0.74 & 0.85  & 73.3 \\
                     & $R_{k,\sigma^2}(x,m)$      & 2.77  & 0.77  & 86.0 & 0.82  & 0.84  & 84.2 \\
\cmidrule(lr){1-2} \cmidrule(lr){3-5} \cmidrule(lr){6-8}
\multirow{3}{*}{0.9} & $r(x,m)$   & 2.24  & 0.73  & 79.5 & 0.76  & 0.81 & 71.0 \\
                     & $r_{k,\sigma^2}(x,m)$  & 3.01  & 0.74 & 84.8 & 0.83  & 0.79  & 82.7 \\
                     & $R_{k,\sigma^2}(x,m)$      & 3.30  & 0.74 & 89.6 & 0.91  & 0.78 & 89.0 \\
\bottomrule
\end{tabular}
\end{small}
\end{table}

\end{document}